\documentclass[lettersize,journal]{IEEEtran}
\usepackage{amsmath,amsfonts}
\usepackage{algorithmic}
\usepackage{algorithm}
\usepackage{array}
\usepackage{textcomp}
\usepackage{stfloats}
\usepackage{url}
\usepackage{verbatim}
\usepackage{graphicx}
\usepackage{cite}
\hyphenation{op-tical net-works semi-conduc-tor IEEE-Xplore}
% updated with editorial comments 8/9/2021

\usepackage{tikz}
\usetikzlibrary{backgrounds}
\usepackage{pgfplots}
\usetikzlibrary{fit} %????
\usetikzlibrary{arrows.meta}% ????
\usepackage{xcolor}
\usepackage{pgfplots}
\usepackage{makecell}
\usepackage{amsmath}
\usetikzlibrary{intersections}
\usetikzlibrary{calc}
\usetikzlibrary{patterns}
\usetikzlibrary{positioning}
\usetikzlibrary{scopes}
\usepackage{amsfonts}
\usepackage{caption}
\usepackage{enumitem}
\usepackage{algorithm}
\usepackage{algorithmic}
\usepackage{float}
\usepackage{gensymb}

\usepackage{amsmath}
\usepackage{amsthm}
\newtheorem{prop}{Proposition}
\newtheorem{remark}{Remark}
\newtheorem{theorem}{Theorem}
\newtheorem{corollary}{Corollary}

\usepackage{booktabs}

\usepackage{graphicx}
\usepackage{subcaption}
%\graphicspath{{img/}}
\usepackage{cases}
\usepackage{empheq}

\usepackage{setspace}
\usepackage{balance}
\usepackage{soul}

\makeatletter
\newcommand*{\rom}[1]{\expandafter\@slowromancap\romannumeral #1@}
\makeatother

\tikzset{%
  do path picture/.style={%
    path picture={%
      \pgfpointdiff{\pgfpointanchor{path picture bounding box}{south west}}%
        {\pgfpointanchor{path picture bounding box}{north east}}%
      \pgfgetlastxy\x\y%
      \tikzset{x=\x/2,y=\y/2}%
      #1
    }
  },
  sin wave/.style={do path picture={    
    \draw [line cap=round] (-3/4,0)
      sin (-3/8,1/2) cos (0,0) sin (3/8,-1/2) cos (3/4,0);
  }},
  cross/.style={do path picture={    
    \draw [line cap=round] (-1,-1) -- (1,1) (-1,1) -- (1,-1);
  }},
  plus/.style={do path picture={    
    \draw [line cap=round] (-3/4,0) -- (3/4,0) (0,-3/4) -- (0,3/4);
  }}
}

\makeatletter
\let\oldtagform@\tagform@
\renewcommand{\eqref}[1]{\textup{\oldtagform@{\ref{#1}}}}
\makeatother

\definecolor{Blue}{RGB}{0,0,255}
\definecolor{Red}{RGB}{255,0,0}

\definecolor{Green}{RGB}{0, 128, 0}
\definecolor{Orange}{RGB}{255, 165, 0}

\definecolor{Blue_1}{RGB}{77,33,189}
\definecolor{Orange_1}{RGB}{247,144,61}
\definecolor{Green_1}{RGB}{89,169,90}
\usepackage{wrapfig, blindtext}

\usepackage[english]{babel}

\usepackage[T1]{fontenc}
\usepackage{multirow}

%\usepackage{titlesec}

%\titlespacing\section{0pt}{4pt plus 4pt minus 2pt}{0pt plus 2pt minus 2pt}
%\titlespacing\subsection{0pt}{4pt plus 4pt minus 2pt}{0pt plus 2pt minus 2pt}
%\titlespacing\subsubsection{0pt}{4pt plus 4pt minus 2pt}{0pt plus 2pt minus 2pt}

\newcommand*\xbar[1]{%
  \hbox{%
    \vbox{%
      \hrule height 0.5pt % The actual bar
      \kern0.5ex%         % Distance between bar and symbol
      \hbox{%
        \kern-0.1em%      % Shortening on the left side
        \ensuremath{#1}%
        \kern-0.1em%      % Shortening on the right side
      }%
    }%
  }%
} 

\usepackage{graphicx}

\begin{document}

\title{Towards Better Long-range Time Series Forecasting \\ using Generative Forecasting}

\author{
  Shiyu Liu, Rohan Ghosh, Mehul Motani \\
  Department of Electrical and Computing Engineering\\
  National University of Singapore\\
  shiyu\_liu@u.nus.edu, rghosh92@gmail.com, motani@nus.edu.sg}
        % <-this % stops a space
%\thanks{This paper was produced by the IEEE Publication Technology Group. They are in Piscataway, NJ.}% <-this % stops a space
%\thanks{Manuscript received April 19, 2021; revised August 16, 2021.}}

% The paper headers
\markboth{Journal of \LaTeX\ Class Files,~Vol.~14, No.~8, August~2021}%
{Shell \MakeLowercase{\textit{et al.}}: A Sample Article Using IEEEtran.cls for IEEE Journals}

%\IEEEpubid{0000--0000/00\$00.00~\copyright~2021 IEEE}
% Remember, if you use this you must call \IEEEpubidadjcol in the second
% column for its text to clear the IEEEpubid mark.

\maketitle

\begin{abstract}
Long-range time series forecasting is usually based on one of two 
existing forecasting strategies: Direct Forecasting and Iterative Forecasting, where the former provides low bias, high variance forecasts and the latter leads to low variance, high bias forecasts. In this paper, we propose a new forecasting strategy called Generative Forecasting (GenF), which generates synthetic data for the next few time steps and then makes long-range forecasts based on generated and observed data. We theoretically prove that GenF is able to better balance the forecasting variance and bias, leading to a much smaller forecasting error. We implement GenF via three components: (i) a novel conditional Wasserstein Generative Adversarial Network (GAN) based generator for synthetic time series data generation, called CWGAN-TS. (ii) a transformer based predictor, which makes long-range predictions using both generated and observed data. (iii) an information theoretic clustering algorithm to improve the training of both the CWGAN-TS and the transformer based predictor. The experimental results on five public datasets demonstrate that GenF significantly outperforms a diverse range of state-of-the-art benchmarks and classical approaches. Specifically, we find a 5\% - 11\% improvement in predictive performance (mean absolute error) while having a 15\% - 50\% reduction in parameters compared to the benchmarks. Lastly, we conduct an ablation study to further explore and demonstrate the effectiveness of the components comprising GenF. 
\end{abstract}

\begin{IEEEkeywords}
Time Series Forecasting, Forecasting Bias and Variance, Generative Adversarial Network and Ablation Study.
%(1) cite latest paper/tnnls (2) the gan paper (3) long-forecasting results
\end{IEEEkeywords}

\section{Introduction}
Accurate forecasting of time series data is an important problem in many sectors, such as energy and healthcare \cite{torres2021deep,NEURIPS2021_312f1ba2,bellot2021neural,bellot2021policy,liu2021gated, passalis2019deep}. In terms of prediction horizon, long-range forecasting (also called multi-step ahead forecasting) is often preferred than short-range forecasting (i.e., few time steps ahead) as it allows more time for early intervention and planning opportunities  \cite{li2019enhancing,rangapuram2018deep,cheng2020towards,NEURIPS2019_466accba,bandara2020lstm}. As an example, long-range forecasting of patient's vital signs effectively gives clinicians more time to take actions and may reduce the occurrence of potential adverse events \cite{Edward2017,jarrett2021clairvoyance,de2021hybrid}.

To perform long-range forecasting, there are two forecasting strategies: Direct Forecasting (DF) and Iterative Forecasting (IF). As the name suggests, DF directly makes predictions $N$ time steps ahead, but the forecasting performance tends to decrease (i.e., forecasting variance increases) as $N$ grows \cite{Mar2006}. In IF, the previous predictions are used as part of the input to recursively make predictions for the next time step. However, the predictions made in such a recursive and supervised manner is susceptible to error propagation, resulting in degraded forecasting performance (i.e., forecasting bias increases) as $N$ grows \cite{taieb2012review,taieb2015bias,sezer2020financial,lim2021time,informer}. 

In this paper, we improve the performance of long-range time series forecasting by proposing a new forecasting strategy. The contributions of our work are summarized as follows.

\begin{enumerate}[itemsep = 2mm,leftmargin=5mm, topsep=0.5pt]

\item We propose a new forecasting strategy called Generative Forecasting (GenF), which first generates synthetic data for the next few time steps and then makes long-range predictions based on generated and observed data. Theoretically, we prove that the proposed GenF is able to better balance the forecasting bias and variance, leading to a much smaller forecasting error.

\item We implement GenF via three components: (i) a new conditional Wasserstein Generative Adversarial Network (GAN) \cite{arjovsky2017wasserstein, mirza2014conditional, gulrajani2017improved,goodfellow2014generative} based generator for synthetic data generation called CWGAN-TS. (ii) a transformer based predictor, which makes long-range predictions using both generated and observed data. (iii) an information theoretic clustering (ITC) algorithm to improve the training of both the CWGAN-TS and the transformer based predictor.

\item We conduct experiments on five public time series datasets and the results demonstrate that GenF significantly outperforms a diverse range of state-of-the-art (SOTA)
benchmarks and classical approaches. Specifically, we find a 5\% - 11\% improvement in predictive performance (evaluated via mean absolute error) while having a 15\% - 50\% reduction in parameters compared to the SOTA benchmarks.

\item We conduct an ablation study to further evaluate and demonstrate the effectiveness of each component comprising GenF from the perspective of synthetic data generation and forecasting performance.
\end{enumerate}

We note that a short version of this work has been published in \cite{liu2019early}, and this work extends \cite{liu2019early} in four aspects: (i) We provide a theoretical justification for the proposed GenF and show that it is able to better balance the forecasting bias and variance, leading to a much smaller forecasting error. (ii) We improve \cite{liu2019early} by introducing the CWGAN-TS as synthetic data generator and a transformer based network as long-range predictor while \cite{liu2019early} used LSTM as both the synthetic data generator and long-range predictor. (iii) We evaluate the performance using five public datasets and compare GenF to more SOTA methods. (iv) An ablation study is conducted to investigate the effectiveness of each components in GenF.

%We evaluate the proposed MRwMR-BUR criterion from the perspective of achieving the goal of MIBFS stated in \eqref{NewP}. Our experimental results in Section \ref{sec3.5} demonstrate that the proposed MRwMR-BUR can better achieve the goal of MIBFS than MRwMR. This serves as a more solid motivation for the proposed MRwMR-BUR. (ii) In addition to the classification accuracy, the performance of MRwMR-BUR is also evaluated in terms of Feature Interpretability (i.e., if the selected feature subset is precise for practitioners to explore the hidden relationship between features and labels) and Classifier Generalization (i.e., if the selected feature subset generalizes well to various classifiers). (iii) We improve the competitiveness of MRwMR-BUR for performance oriented tasks and propose a classifier based approach to estimate UR, leading to another variant of MRwMR-BUR, called MRwMR-BUR-CLF. The MRwMR-BUR-CLF further improves the performance of MRwMR by 3.8\% - 5.5\% and it also outperforms three popular classifier based feature selection methods.

\section{Background}

{\bf Problem definition. } Suppose we have an observation window containing multivariate observations for past $M$ time steps \{$X_1$, $X_2$, $\cdots$,  $X_M$ | $X_{i} \in \mathbb{R}^K$\}, where $M$ is the observation window length, $K$ is the number of features per observation and $X_{i}$ is the observation at time step $i$ (see Fig.\ref{OP}). The task of time series forecasting is to find an approach to map past observations to the future value, i.e., \{$X_1$, $X_2$, $\cdots$,  $X_M$\} $\rightarrow$ $\xbar{X}_{M+N}$. We note that $N$ is the prediction horizon, indicating we plan to make predictions $N$ time steps ahead (i.e., at time step $M$+$N$ in Fig. \ref{OP}). Next, in Section \ref{RW}, we discuss related work and the application of GAN-based models. In Section \ref{BM}, we shortlist two classical models and five SOTA baselines for performance comparison.

%{\fontfamily{lmtt}\selectfont Prediction Horizon}
\begin{figure*}[!t]
\begin{center}
\resizebox{15.5cm}{!}{
\begin{tikzpicture}[font=\fontfamily{lmtt}]
	\draw[->] [blue!5!black, thick] (-0.3, 1.5) -- (13.3,1.5);	
	\node at (0,2) (c) {$X_{1}$};
	\node at (1,2) (c) {$X_{2}$};
	\node at (2.2,2) (c) {$\cdots$};
	\node at (3.2,2) (c) {$X_{M}$};
	\node at (4.6,2) (c) {$\xbar{X}_{M+1}$};
	\node at (5.9,2) (c) {$\xbar{X}_{M+2}$};
	\node at (7.2,2) (c) {$\cdots$};
	\node at (8.5,2) (c) {$\xbar{X}_{M+N-1}$};
	\node at (10.8,2) (c) {$\xbar{X}_{M+N}$};
	\draw (10.8,2) ellipse (0.8cm and 0.4cm);

    \draw [dashed,thick,->,Blue, line width=0.5mm](3.3,2.4) [out=45,in=135] to  (4.5,2.4);
    \draw [dashed,thick,->,Blue, line width=0.5mm](4.6,2.4) [out=60,in=135] to  (5.8,2.4);
    \draw [dashed,thick,->,Blue, line width=0.5mm](5.9,2.4) [out=60,in=135] to  (7.1,2.4);
    \draw [dashed,thick,->,Blue, line width=0.5mm](7.2,2.4) [out=60,in=135] to  (8.4,2.4);
    \draw [dashed,thick,->,Blue, line width=0.5mm](8.5,2.4) [out=40,in=140] to  (10.7,2.4);

    \draw [thick,->,Red, line width=0.4mm](3.2,2.4) [out=25,in=155] to  (10.8,2.4);

	\node [black] at (1.9,1.2) (c) {\footnotesize Observation Window ($M$)};
	\node [black] at (6.7,1.2) (c) {\footnotesize Synthetic Window ($N-1$)};
	\node [black] at (11.4,1.2) (c) {\footnotesize Prediction Horizon ($N$)};
	\node [black] at (12.6,1.8) (c) { \small Time Step};
	
    \draw[thick,black,->,Red, line width=0.5mm] (-0.2,3.0)--(0.8,3.0)node [right,black]{\footnotesize Direct Forecasting};
    \draw[thick,black,dashed,->,Blue, line width=0.5mm] (-0.2,3.4)--(0.8,3.4)node [right,black]{\footnotesize Iterative Forecasting};

	\draw [blue!5!black, thick](-0.3, 1.65) -- (3.7, 1.65) -- (3.7, 2.4) -- (-0.3, 2.4) -- (-0.3, 1.65);	
	\draw [blue!5!black, dashed, thick](4, 1.65) -- (9.5, 1.65) -- (9.5, 2.4) -- (4, 2.4) -- (4, 1.65);		
	\end{tikzpicture}}
	\vspace{-2mm}
	\caption{Illustration of Direct/Iterative Forecasting via Observation Window, Synthetic Window and Prediction Horizon. }
	\label{OP}
	\vspace{-4mm}
\end{center}
\end{figure*}
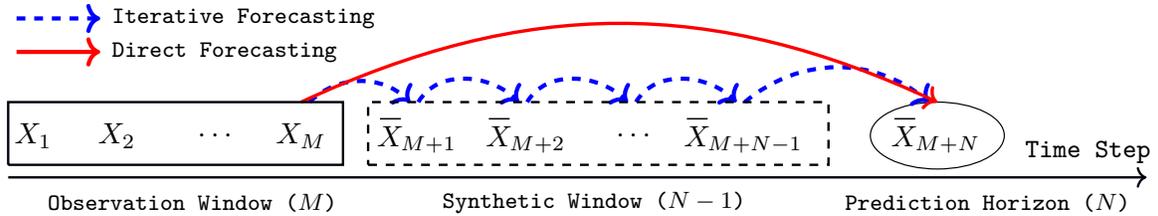

\subsection{Related Work}
\label{RW}

{\bf (1) Related work of time series forecasting.} The example of early methods using neural networks to perform long-range forecasting is \cite{nguyen2004multiple}, which proposed a group of networks to make predictions at different time steps. Along the way, several works attempt to improve the long-range forecasting by proposing new architectures. For example, \cite{yu2017long} proposed a Long Short-Term Memory (LSTM) \cite{LSTM} based Tensor-Train Recurrent Neural Network as a module for sequence-to-sequence framework \cite{sutskever2014sequence}, called TLSTM. \cite{DeepAR} proposed an autoregressive recurrent network called DeepAR to provide probabilistic forecasts. \cite{lai2018modeling} proposed a Long- and Short-term Time-series network (LSTNet) which incorporates with an attention-based layer and autoregressive models. More recently, transformers have shown superior performance in capturing long-range dependency than recurrent networks. Recent works aim to improve the transformer by simplifying its complexity. For example, LogSparse \cite{li2019enhancing} and Reformer \cite{Kitaev2020Reformer} use heuristics to reduce the complexity of self-attention mechanism from $\mathcal{O}(n^2)$ to $\mathcal{O}(n \log n)$, where $n$ is the sequence length. Moreover, Informer \cite{informer} aimed to reduce the complexity via ProbSparse and distilling operations. In terms of the forecasting strategy, all these methods can be classified into two main classes: direct and iterative forecasting.

In {\bf direct forecasting}, the model $f$ is trained to directly make predictions for the prediction horizon $N$, i.e., $\xbar{X}_{M+N} = f (X_1, \cdots,  X_M)$ (see Fig. \ref{OP}). The advantage is that the models trained for different values of $N$ are independent and hence, immune to error propagation. However, as $N$ grows, DF tends to provide low bias but high variance predictions \cite{taieb2012review}. This can be seen by considering an example where the best forecast is a linear trend. In this case, DF may yield a broken curve as it does not leverage the dependencies of synthetic data \cite{bon2012}. 

In {\bf iterative forecasting}, the model is trained to make predictions for the next time step only i.e., $\xbar{X}_{M+1} = f (X_1, \cdots,  X_M)$ (see Fig. \ref{OP}). The same model will be used over and over again and previous predictions are used together with the past observations to make predictions for the next time step (e.g., $\xbar{X}_{M+2} = f (X_2, \cdots,  X_M, \xbar{X}_{M+1})$). This process is recursively repeated to make predictions for next $N$ time steps. The previous predictions can be considered as synthetic data with a synthetic window length = $N$-1. However, the synthetic data generated in such a supervised and recursive manner is susceptible to error propagation, i.e., a small error in the current prediction becomes larger in subsequent predictions, leading to low variance but high bias predictions \cite{Sorja2007}. 
Based on IF, to address the issue of error propagation, RECTIFY \cite{taieb2012} rectifies the synthetic data to be unbiased and Seq2Seq based models \cite{sutskever2014sequence} extends the decoder by adding more sequential models and each sequential model is trained for a specific prediction horizon with different parameters. 

{\bf (2) Related work of GAN. }Recently, GAN based networks have demonstrated promising results in many generative tasks. The first GAN applied to time series data was C-RNN-GAN \cite{mogren2016c} which used LSTM as the generator and discriminator. %One follow-up work, RCGAN \cite{esteban2017real} made use of the conditioning information to generate synthetic medical data.
Along the way, many works have explored generating synthetic data to address various problems. As an example, \cite{frid2018gan} used synthetic data augmentation to improve the classification results, and \cite{yoon2019time} proposed TimeGAN which trains predictive models to perform one-step ahead forecasting. It is also worthy mentioning that \cite{wu2020adversarial} also leveraged GAN for time series forecasting. The key difference with our proposed GenF is that (i) The proposed GenF is a general framework and is flexible enough to support any model as synthetic data generator. (ii) The CWGAN-TS used in GenF is not directly involved during the training of forecasting model while the generator's output in \cite{wu2020adversarial} is the forecasting.

%However, to the best of our knowledge, the use of synthetic data generated by the GAN based networks to improve long-range forecasting remains largely unexplored.

\subsection{Selected Benchmark Methods}
\label{BM}
We shortlist five SOTA baselines discussed above: {\bf (i) TLSTM} (seq2seq based model), {\bf (ii) LSTNet} (attention based model), {\bf (iii) DeepAR} (autoregressive based model), {\bf (iv) LogSparse} (transformer based model) and {\bf (v) Informer} (transformer based model, direct forecasting) for comparison as they are reported to provide outstanding long-range forecasting performance \cite{lai2018modeling,informer}. Moreover, the authors of these methods have provided clear and concise source code, allowing us to correctly implement and tune these algorithms. In addition, two classical time series forecasting approaches: {\bf (i) LSTM and (ii) Autoregressive Integrated Moving Average (ARIMA)} \cite{ARIMA} are examined for comparison as well.

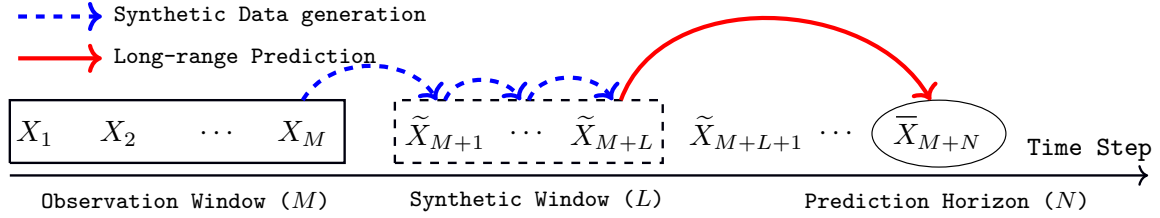
\begin{figure*}[!t]
\begin{center}
\resizebox{15.5cm}{!}{
\begin{tikzpicture}[font=\fontfamily{lmtt}]
	\draw[->] [blue!5!black, thick] (-0.3, 1.5) -- (13.3,1.5);	
	\node at (0,2) (c) {$X_{1}$};
	\node at (1,2) (c) {$X_{2}$};
	\node at (2.2,2) (c) {$\cdots$};
	\node at (3.2,2) (c) {$X_{M}$};
	\node at (4.9,2) (c) {$\widetilde{X}_{M+1}$};
	\node at (5.9,2) (c) {$\cdots$};
	\node at (6.9,2) (c) {$\widetilde{X}_{M+L}$};
	\node at (8.5,2) (c) {$\widetilde{X}_{M+L+1}$};
	\node at (9.6,2) (c) {$\cdots$};
	\node at (10.8,2) (c) {$\xbar{X}_{M+N}$};
	\draw (10.8,2) ellipse (0.8cm and 0.4cm);
	\node [black] at (1.8,1.2) (c) {\footnotesize Observation Window ($M$)};

    \node [black] at (6.0,1.2) (c) {\footnotesize Synthetic Window ($L$)};
	\node [black] at (10.9,1.2) (c) {\footnotesize Prediction Horizon ($N$)};
	\node [black] at (12.6,1.8) (c) { \small Time Step};
	\draw [blue!5!black, thick](-0.3, 1.65) -- (3.7, 1.65) -- (3.7, 2.4) -- (-0.3, 2.4) -- (-0.3, 1.65);	
	\draw [blue!5!black, dashed, thick](4.3, 1.65) -- (7.5, 1.65) -- (7.5, 2.4) -- (4.3, 2.4) -- (4.3, 1.65);

    \draw [thick,->,Blue,dashed, line width=0.5mm](3.2,2.4) [out=60,in=120] to  (4.8,2.4);
    \draw [thick,->,Blue,dashed, line width=0.5mm](4.9,2.4) [out=70,in=120] to  (5.8,2.4);
    \draw [thick,->,Blue,dashed, line width=0.5mm](5.9,2.4) [out=70,in=120] to  (6.9,2.4);
    
    \draw [thick,->,Red, line width=0.5mm](7.0,2.4) [out=70,in=120] to  (10.7,2.4);

    \draw[thick,black,->,Blue,dashed, line width=0.5mm] (-0.2,3.4)--(0.8,3.4)node [right,black]{\footnotesize Synthetic Data generation};
    \draw[thick,black,->,Red, line width=0.5mm] (-0.2,2.9)--(0.8,2.9)node [right,black]{\footnotesize Long-range Prediction};
    \end{tikzpicture}}
    %\vspace{-2mm}
    \caption{Illustration of the Proposed GenF via Observation Window, Synthetic Window and Prediction Horizon.}
     \label{GenF_Img}
    \vspace{-3mm}
\end{center}
\end{figure*}

\section{Generative Forecasting (GenF)}
\label{sec3}

We first introduce the idea of GenF in Section \ref{sec3.1}. Next, in Section \ref{sec3.2}, we theoretically prove that the proposed GenF is able to better balance the forecasting variance and bias, leading to a smaller forecasting error. Lastly, in Section \ref{sec3.3}, we detail the implementation of GenF.

\subsection{Idea of GenF}
\label{sec3.1}

To improve long-range time series forecasting, we develop an approach called Generative Forecasting (GenF), which consists of two steps (see our illustration in Fig. \ref{GenF_Img}): 
\begin{enumerate}[itemsep = 2mm,leftmargin=5mm, topsep=0.5pt]
    \item {\bf Synthetic Data Generation}: GenF first recursively generates synthetic synthetic data for next $L$ time steps (i.e., $\widetilde{X}_{M+1},..., \widetilde{X}_{M+L}$ in the dashed box in Fig. \ref{GenF_Img}) conditioned on the data of past $M$ time steps.
    \item {\bf Long-range Prediction}: GenF concatenates the past observations ($X_{1},..., X_{M}$) with the generated synthetic data ($\widetilde{X}_{M+1},...,\widetilde{X}_{M+L}$) and keeps a window size of $M$ by dropping the oldest observations, resulting in a sequence of ($X_{L+1},..., X_{M}$, $\widetilde{X}_{M+1},...,\widetilde{X}_{M+L}$). Finally, GenF makes long-range predictions for time step $M+N$ using ($X_{L+1},..., X_{M}$, $\widetilde{X}_{M+1},...,\widetilde{X}_{M+L}$) as input.
\end{enumerate}

The {\bf key difference} with direct forecasting and iterative forecasting is that GenF leverages synthetic data to shorten the effective prediction horizon and has a flexible synthetic window length of $L$ (see Fig. \ref{GenF_Img})), respectively. Unlike iterative forecasting of which the synthetic window length depends on the prediction horizon, the synthetic window length $L$ of GenF does not depends on the prediction horizon and is flexible. Adjusting the value of $L$ is a trade-off between forecasting variance and bias. A large value of $L$ brings GenF close to iterative forecasting, while a small value of $L$ brings GenF close to direct forecasting.

\subsection{Theoretical Results}
\label{sec3.2}

We now provide some theoretical insights into the behavior of forecasting error for the proposed GenF approach. In order to do so, we first undertake a bias-variance based approach for approximating the forecasting errors. Then, we estimate the variance of the low-bias direct forecasting step and the bias of the low-variance iterative forecasting step. Subsequently, we provide a theoretical result that bounds the forecasting error in terms of the bias and variance of the iterative and direct forecasting steps, respectively. Finally, we show that under certain conditions, the proposed GenF will yield much better performance. The proofs of the results given below are provided in the Appendix.

Let $Y$ = $\{{X_1, \cdots, X_M} \}$ be the past observations and $u_{M + N}$ = $\mathbb{E}[X_{M+N}|Y]$ be the conditional expectation of $N$-step ahead observation. Let $f(Y, \theta, N)$ be the $N$-step ahead forecast using $Y$ as the input with parameter $\theta \in \Theta$, where $\Theta$ represents the set of all possible trained parameter configurations from the dataset instances from the underlying data distribution. The Mean Squared Error (MSE) of a given strategy at prediction horizon $N$ can be decomposed as follows \cite{taieb2015bias}.
\vspace{2mm}\begin{align}
\text{MSE}_{N} &= \underbrace{\mathbb{E}_{Y}[(X_{M+N} - u_{M + N})^2|Y]}_\text{\small Noise, Z(N)} \nonumber \\ & + \underbrace{\mathbb{E}_{Y}[(u_{M + N} - \mathbb{E}_{\Theta} [f(Y,\theta, N)])^2]}_\text{\small Bias, B(N)} \nonumber \\ & + \underbrace{\mathbb{E}_{Y, \Theta}[(f(Y,\theta, N) - \mathbb{E}_{\Theta} [f(Y,\theta, N)])^2]}_\text{\small Variance, V(N)}. \label{theorem_eq}
\end{align}

The first term $Z(N)$ in \eqref{theorem_eq} is irreducible noise, which does not depend on the forecasting strategy used. The second term $B(N)$ is the forecasting bias and the third term $V(N)$ is the forecasting variance. Both $B(N)$ and $V(N)$ depend on the employed forecasting strategy and tend to grow with the prediction horizon $N$. The ideal configuration is to have a low bias and a low variance. However, this is never achieved in practice as decreasing bias will increase the variance and vice versa. Hence, a good forecasting strategy is to better balance bias and variance, so as to obtain the smallest MSE.

\begin{prop}\label{prop:1}
Let $S$ be the sum of bias and variance terms, we have $S_{dir} = $ $B_{dir}(N)$ + $V_{dir}(N)$ for direct forecasting and $S_{iter}$ = $B_{iter}(N)$ + $V_{iter}(N)$ for iterative forecasting. For GenF, let $Y_{M-L} = \{X_{L+1}, \cdots, X_{M}\}$ and $Y_{L} = \{{X}_{M+1}, \cdots, {X}_{M+L}\}$ be the past observations and let $\widetilde{Y}_{L} = \{\widetilde{X}_{M+1}, \cdots, \widetilde{X}_{M+L}\}$ be the generated synthetic data for the next $L$ time steps. Let $\gamma(\theta, N-L) =f(\{Y_{M-L},\widetilde{Y}_{L}\},\theta,N-L)-f(\{Y_{M-L},{Y}_{L}\},\theta,N-L) $. We then have, 

%$\gamma(\theta, N-L) =f(\{X_{L-1}, \cdots, \widehat{X}_{M+1},\cdots,\widehat{X}_{M+L} \},\theta,N-L)-f(\{X_{L-1},\cdots,{X}_{M+1},\cdots,{X}_{M+L} \},\theta,N-L) $. We then have, 

%$\widehat{X}_{M+L}$ be the estimate of $X_{M+L}$, from the iterative forecasting function. We have $S_{dir} = $ $B_{dir}(N)$ + $V_{dir}(N)$ for direct forecasting and $S_{iter}$ = $B_{iter}(N)$ + $V_{iter}(N)$ for iterative forecasting. Let $\gamma(\theta, N-L) =f(X_L, \cdots, \widehat{X}_{M+L},\theta,N-L)-f(X_{M+L},\theta,N-L) $. We then have, 
\end{prop}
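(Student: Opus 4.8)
The plan is to view the GenF forecast at horizon $N$ as a perturbation of a direct $(N-L)$-step forecast. By the definition of $\gamma$ one can write the GenF output exactly as $f(\{Y_{M-L},\widetilde{Y}_{L}\},\theta,N-L) = f(\{Y_{M-L},Y_{L}\},\theta,N-L) + \gamma(\theta,N-L)$, so I would substitute this into the bias functional $B(\cdot)$ and variance functional $V(\cdot)$ of \eqref{theorem_eq}, now evaluated at the GenF predictor rather than at a generic $f(Y,\theta,N)$. The first observation is that $\{Y_{M-L},Y_{L}\}=\{X_{L+1},\dots,X_{M+L}\}$ is again a length-$M$ window whose $(N-L)$-step forecast targets time step $M+N$; invoking (weak) stationarity so that the relevant conditional mean coincides with $u_{M+N}$, the pair $\big(\mathbb{E}_{Y}[(u_{M+N}-\mathbb{E}_{\Theta}[f(\{Y_{M-L},Y_{L}\},\theta,N-L)])^{2}],\ \mathbb{E}_{Y,\Theta}[(f(\{Y_{M-L},Y_{L}\},\theta,N-L)-\mathbb{E}_{\Theta}[f(\{Y_{M-L},Y_{L}\},\theta,N-L)])^{2}]\big)$ is exactly $(B_{dir}(N-L),\,V_{dir}(N-L))$.

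Next I would expand the two squares. Writing $g(\theta)=f(\{Y_{M-L},Y_{L}\},\theta,N-L)$ and centering $g$ and $\gamma$ about their $\Theta$-means, the variance of the sum is $V_{GenF}(N)=V_{dir}(N-L)+\mathrm{Var}_{Y,\Theta}[\gamma(\theta,N-L)]+2\,\mathrm{Cov}_{Y,\Theta}(g(\theta)-\mathbb{E}_{\Theta}[g],\,\gamma(\theta,N-L)-\mathbb{E}_{\Theta}[\gamma])$, and with $a=u_{M+N}-\mathbb{E}_{\Theta}[g(\theta)]$ and $b=\mathbb{E}_{\Theta}[\gamma(\theta,N-L)]$ the bias is $B_{GenF}(N)=\mathbb{E}_{Y}[(a-b)^{2}]=B_{dir}(N-L)+\mathbb{E}_{Y}[b^{2}]-2\,\mathbb{E}_{Y}[ab]$. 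Adding the two gives $S_{GenF}=B_{dir}(N-L)+V_{dir}(N-L)$ plus a collection of correction terms quadratic in $\gamma$ together with cross terms that couple $\gamma$ to the clean direct bias/variance. I would then bound the cross terms by Cauchy--Schwarz, recognizing that the clean-term contributions combine to $S_{dir}(N-L)=B_{dir}(N-L)+V_{dir}(N-L)$ and the $\gamma$ contributions combine to $\mathbb{E}_{Y,\Theta}[\gamma(\theta,N-L)^{2}]$, landing at the clean form $S_{GenF}\le S_{dir}(N-L)+\mathbb{E}_{Y,\Theta}[\gamma(\theta,N-L)^{2}]+2\sqrt{S_{dir}(N-L)\,\mathbb{E}_{Y,\Theta}[\gamma(\theta,N-L)^{2}]}=\big(\sqrt{S_{dir}(N-L)}+\sqrt{\mathbb{E}_{Y,\Theta}[\gamma(\theta,N-L)^{2}]}\big)^{2}$ (or the corresponding exact equality with the covariance terms left in place, if that is the stated version).

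The interpretive conclusion I would record is that $S_{dir}(N-L)<S_{dir}(N)=S_{dir}$ because direct-forecasting bias and variance grow with the horizon, whereas $\mathbb{E}_{Y,\Theta}[\gamma(\theta,N-L)^{2}]$ is controlled by how close $\widetilde{Y}_{L}$ is to $Y_{L}$ --- precisely the low-error regime of short-horizon iterative generation, which ties the residual back to $B_{iter}(L)+V_{iter}(L)$; hence for an appropriate choice of $L$ the right-hand side falls strictly below $S_{dir}$, recovering the claimed advantage of GenF.

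The step I expect to be the main obstacle is the identification of the ``clean'' functionals with $B_{dir}(N-L)$ and $V_{dir}(N-L)$: this is not purely formal, since the shifted window $\{X_{L+1},\dots,X_{M+L}\}$ conditions on a different set than $\{X_{1},\dots,X_{M}\}$, so one needs a stationarity (or Markov-type) assumption to equate the conditional-mean targets and to ensure the $\Theta$-average is taken over the matching parameter set; one must also keep the bookkeeping straight when the expectation is over the joint law of the past and the true future. Handling the cross terms cleanly --- they couple the clean direct bias with the expected generator perturbation --- and choosing whether to present the exact covariance identity or its Cauchy--Schwarz upper bound is the remaining delicate point.
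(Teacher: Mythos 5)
Your setup matches the paper's: both arguments insert and subtract the ``clean'' direct forecast $f(\{Y_{M-L},Y_{L}\},\theta,N-L)$ inside the GenF error and expand the square, and both identify the clean part with $B_{dir}(N-L)+V_{dir}(N-L)$ and the perturbation part with $\mathbb{E}_{Y,\theta}[\gamma(\theta,N-L)^2]$. But there is a genuine gap at exactly the point you flag as ``the remaining delicate point'': the proposition asserts an \emph{equality}, $S_{GenF}=\mathbb{E}_{\theta\sim\Theta,Y}[\gamma(\theta,N-L)^2]+B_{dir}(N-L)+V_{dir}(N-L)$, and your argument never makes the cross terms disappear. Leaving the covariance terms in place gives a different (longer) identity, and replacing them by Cauchy--Schwarz gives only the one-sided bound $S_{GenF}\le\bigl(\sqrt{S_{dir}(N-L)}+\sqrt{\mathbb{E}[\gamma^2]}\bigr)^{2}$, which is strictly weaker than the stated result and would not feed correctly into Theorem~1, whose upper bound $U_{GenF}$ is built from the additive form. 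The paper closes this gap with a structural claim, not an inequality: the term $\gamma(\theta,N-L)=f(\{Y_{M-L},\widetilde{Y}_{L}\},\theta,N-L)-f(\{Y_{M-L},Y_{L}\},\theta,N-L)$ is driven by the error of the iterative generator (the discrepancy $\widetilde{Y}_{L}-Y_{L}$), which is taken to be independent of the direct forecaster's error $f(\{Y_{M-L},Y_{L}\},\theta,N-L)-X_{M+N}$; under that independence (orthogonality) assumption the cross expectation vanishes identically and the clean additive decomposition follows. Without invoking some such assumption you cannot reach the proposition as stated, so the missing idea is precisely this orthogonality step rather than a sharper bound on the cross terms.

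Two smaller points. First, your concern about identifying the shifted-window quantities with $B_{dir}(N-L)$ and $V_{dir}(N-L)$ is legitimate, but the paper simply treats the direct forecaster applied to the length-$M$ window $\{X_{L+1},\dots,X_{M+L}\}$ at horizon $N-L$ as \emph{the} direct forecaster at that horizon (and drops the noise term $Z$ throughout), so no extra stationarity machinery is introduced there. Second, your closing paragraph argues $S_{GenF}<S_{dir}$ for suitable $L$; that comparison is not part of Proposition~1 --- it is the content of Theorem~1 and Corollary~1, which require the additional Lipschitz and Gaussian-parameter assumptions --- so it should not be folded into this proof.
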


\vspace*{-2mm}\begin{align}
    & S_{GenF} = \underbrace{B_{iter} (L) + V_{iter} (L)}_\text{\small Iterative Forecasting} \nonumber \\ &+ \underbrace{B_{dir} (N - L) + V_{dir} (N - L) + \mathbb{E}_{\theta \sim \Theta}[\gamma(\theta, N-L)^2]}_\text{\small Direct Forecasting} \label{eq2_main}
\end{align}

The proposition yields the joint bias and variance of the proposed GenF. Using this breakdown, the following Theorem and Corollary provide error bounds using this framework. 

%, using the bias-variance decompositions of its iterative and direct forecasting subparts.

\begin{theorem}\label{thm:1}
%We consider the direct forecasting with parameters $\theta_D$, iterative forecasting with parameters $\theta_I$, and the proposed GenF in Proposition \eqref{prop:1}. Assume that the ground truth realization of the forecasting process can be modelled by some $\theta_D^*$ and $\theta_I^*$, and after training, the estimated parameters follow $\theta_D \sim \mathcal{N}(\theta_D^*,\sigma_D^2)$ and $\theta_I \sim \mathcal{N}(\theta_I^*,\sigma_I^2)$. Assume that the iterative forecasting function is 2nd-order $L_1,L_2$-Lipschitz continuous, and the direct forecasting function is first order Lipschitz continuous. Let us denote quadratic recurrence relations of the form $b_{\alpha}(k+1)= b_{\alpha}(k)\left(L_1+1+b_{\alpha}(k)L_2 \right),$ where $b_{\alpha}(1)=\alpha \sigma_I^2$, for any $\alpha\geq0$. Assume that iterative forecasting has zero variance and direct forecasting has zero bias. Then, for some constants $\beta_0,\beta_1,\beta_2\geq0$, which represent the Lipschitz constants of the direct forecasting function, we have $S_{dir} \leq  U_{dir}$, $S_{iter} \leq U_{iter}$ and $S_{GenF} \leq U_{GenF}$, where $U_{dir} =  N\beta_1 + \sigma_{D}^2\beta_2$, $U_{iter} = b_{\alpha}(N)^2$, and $U_{GenF} = b_{\alpha}(L)^2(1+\beta_0) + (N-L)\beta_1 + \sigma_{D}^2\beta_2$. The quantities $\alpha$ and $\beta_0,\beta_1,\beta_2$ depend on the iterative and direct forecasting functions respectively.

We consider the direct forecasting with parameters $\theta_D$, iterative forecasting with parameters $\theta_I$, and the proposed GenF in Proposition \eqref{prop:1}. Assume that the ground truth realization of the forecasting process can be modelled by some $\theta_D^*$ and $\theta_I^*$, and after training, the estimated parameters follow $\theta_D \sim \mathcal{N}(\theta_D^*,\sigma_D^2)$ and $\theta_I \sim \mathcal{N}(\theta_I^*,\sigma_I^2)$. Assume that the iterative forecasting function is 2nd-order $L_1,L_2$-Lipschitz continuous, and the direct forecasting function is first order Lipschitz continuous. Let us denote quadratic recurrence relations of the form $b_{\alpha}(k+1)= b_{\alpha}(k)\left(L_1+1+b_{\alpha}(k)L_2 \right),$ where $b_{\alpha}(1)=\alpha \sigma_I^2$, for any $\alpha\geq0$. Assume that iterative forecasting has zero variance and direct forecasting has zero bias. Then, for some constants $\beta_0,\beta_1,\beta_2\geq0$, which represent the Lipschitz constants of the direct forecasting function, we have $S_{dir} \leq  U_{dir}$, $S_{iter} \leq U_{iter}$ and $S_{GenF} \leq U_{GenF}$, where $U_{dir} =  (N-1)\beta_1 + \sigma_{D}^2\beta_2$, $U_{iter} = b_{\alpha}(N)^2$, and $U_{GenF} = b_{\alpha}(L)^2(\beta_0) + (N-L-1)\beta_1 + \sigma_{D}^2\beta_2$. The quantities $\alpha$ and $\beta_0,\beta_1,\beta_2$ depend on the iterative and direct forecasting functions respectively.

\end{theorem}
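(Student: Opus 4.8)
The plan is to start from the decomposition in Proposition~\ref{prop:1} together with the two simplifying hypotheses of the theorem (iterative forecasting has zero variance, direct forecasting has zero bias). These collapse the three quantities to $S_{dir}=V_{dir}(N)$, $S_{iter}=B_{iter}(N)$, and $S_{GenF}=B_{iter}(L)+V_{dir}(N-L)+\mathbb{E}_{\theta\sim\Theta}[\gamma(\theta,N-L)^2]$, so it suffices to bound each surviving piece separately and then collect constants.

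First I would bound the direct-forecasting variance. With $\theta_D\sim\mathcal{N}(\theta_D^*,\sigma_D^2)$ and the direct forecasting map first-order Lipschitz continuous, the deviation $f(Y,\theta_D,N)-\mathbb{E}_{\Theta}[f(Y,\theta_D,N)]$ is controlled by $\|\theta_D-\theta_D^*\|$ plus a horizon-dependent contribution reflecting the growth of direct-forecast variance with $N$; squaring, taking expectations, and using $\mathbb{E}[\|\theta_D-\theta_D^*\|^2]=\sigma_D^2$ yields $V_{dir}(N)\le (N-1)\beta_1+\sigma_D^2\beta_2$ for the relevant Lipschitz constants $\beta_1,\beta_2$, and the identical argument at horizon $N-L$ gives $V_{dir}(N-L)\le (N-L-1)\beta_1+\sigma_D^2\beta_2$. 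This immediately establishes $S_{dir}\le U_{dir}$.

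Second I would bound the iterative-forecasting bias via an error recurrence. At the first step the bias stems only from $\theta_I\sim\mathcal{N}(\theta_I^*,\sigma_I^2)$, giving an error magnitude $b_{\alpha}(1)=\alpha\sigma_I^2$; at each later step the previous prediction is fed back, and 2nd-order $L_1,L_2$-Lipschitz continuity propagates an error $e$ to at most $L_1 e+L_2 e^2$, while the fed-back term itself carries an additional $e$, producing exactly $b_{\alpha}(k+1)=b_{\alpha}(k)(L_1+1+b_{\alpha}(k)L_2)$. Since $b_{\alpha}$ is nonnegative and nondecreasing, an induction on $k$ shows the accumulated error after $N$ steps is at most $b_{\alpha}(N)$, hence $B_{iter}(N)\le b_{\alpha}(N)^2$, i.e. $S_{iter}\le U_{iter}$; applying the same with horizon $L$ gives $B_{iter}(L)\le b_{\alpha}(L)^2$.

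Third, for GenF I would handle the cross term $\mathbb{E}_{\theta\sim\Theta}[\gamma(\theta,N-L)^2]$: $\gamma(\theta,N-L)$ is the difference of the direct forecast evaluated on $\{Y_{M-L},\widetilde{Y}_{L}\}$ versus $\{Y_{M-L},Y_{L}\}$, so first-order Lipschitz continuity bounds $|\gamma(\theta,N-L)|$ by a constant multiple of $\|\widetilde{Y}_{L}-Y_{L}\|$, and $\|\widetilde{Y}_{L}-Y_{L}\|$ is precisely the accumulated iterative-generation error, bounded by $b_{\alpha}(L)$. Therefore $\mathbb{E}[\gamma(\theta,N-L)^2]\le \beta_0' \, b_{\alpha}(L)^2$, and combining this with $B_{iter}(L)\le b_{\alpha}(L)^2$ and the direct-variance bound at horizon $N-L$ gives $S_{GenF}\le \beta_0 b_{\alpha}(L)^2+(N-L-1)\beta_1+\sigma_D^2\beta_2$ with $\beta_0=1+\beta_0'$, which is $U_{GenF}$. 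I expect the main obstacle to be step two: one must carefully justify that the per-step parameter noise and the fed-back synthetic value combine with the Lipschitz propagation to yield exactly the stated recurrence, and verify the monotonicity of $b_{\alpha}$ that the induction relies on; the direct-forecasting variance bound and the $\gamma$ estimate are comparatively routine Lipschitz-plus-Gaussian arguments.
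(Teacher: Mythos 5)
Your proposal follows essentially the same route as the paper's proof: collapse the Proposition~\ref{prop:1} decomposition under the zero-variance/zero-bias assumptions, bound the iterative bias by deriving the quadratic recurrence $b_{\alpha}(k+1)=b_{\alpha}(k)(L_1+1+b_{\alpha}(k)L_2)$ from Lipschitz error propagation with Gaussian parameter noise, bound $V_{dir}$ by a two-step Lipschitz argument (first in the horizon, giving $(N-1)\beta_1$, then in $\theta_D$, giving $\sigma_D^2\beta_2$), and control $\mathbb{E}[\gamma(\theta,N-L)^2]$ by Lipschitz continuity in the inputs times the accumulated generation error $b_{\alpha}(L)$. The only differences are bookkeeping (you absorb the extra $B_{iter}(L)\le b_{\alpha}(L)^2$ term into $\beta_0=1+\beta_0'$, whereas the paper's appendix form of the proposition folds the iterative part entirely into $\gamma$ and takes $\beta_0=(L\beta_0')^2$), which do not change the stated bounds.
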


\begin{corollary}\label{corr:1}
%$U_{dir}$, $U_{iter}$ and $U_{GenF}$ are as defined in Theorem \ref{thm:1}.When  $b_{\alpha}(1)^2(1+\beta_0)<\beta_1<b_{\alpha}(N)^2-b_{\alpha}(N-1)^2(1+\beta_0)-\sigma_D^2\beta_2$, we have that $U_{GenF} < U_{iter}$ and $U_{GenF} < U_{dir}$, for some $0<L<N$. Furthermore, when $N\beta_1 + \sigma_{D}^2\beta_2 \approx  b_{\alpha}(N)^2$, we have $U_{GenF} < U_{iter}$ and $U_{GenF} < U_{dir}$, for \textit{any} choice of $0<L<N$.

$U_{dir}$, $U_{iter}$ and $U_{GenF}$ are as defined in Theorem \ref{thm:1}.
When  $\beta_0<\min\{\beta_1/b_{\alpha}(1)^2,(b_{\alpha}(N)^2-\sigma_D^2\beta_2)/b_{\alpha}(N-1)^2\}$, we have that $U_{GenF} < U_{iter}$ and $U_{GenF} < U_{dir}$, for some $0<L<N$. Furthermore, when $(N-1)\beta_1 + \sigma_{D}^2\beta_2 \approx  b_{\alpha}(N)^2$, we have $U_{GenF} < U_{iter}$ and $U_{GenF} < U_{dir}$, for \textit{any} choice of $0<L<N$.
\end{corollary}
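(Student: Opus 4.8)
The plan is to treat the three scalar upper bounds from Theorem \ref{thm:1} directly, regarding $U_{GenF}$ as a function of the synthetic window length through $U_{GenF}(L)=\beta_0\, b_{\alpha}(L)^2+(N-L-1)\beta_1+\sigma_D^2\beta_2$, and comparing it to the constants $U_{dir}=(N-1)\beta_1+\sigma_D^2\beta_2$ and $U_{iter}=b_{\alpha}(N)^2$. The first thing I would record is a short lemma on the recurrence: since $L_1,L_2\ge 0$ and $b_{\alpha}(1)=\alpha\sigma_I^2>0$, the sequence $k\mapsto b_{\alpha}(k)$ is strictly positive and strictly increasing, its ratio $b_{\alpha}(k+1)/b_{\alpha}(k)=L_1+1+b_{\alpha}(k)L_2$ is nondecreasing, so $b_{\alpha}(k)\ge b_{\alpha}(1)(L_1+1)^{k-1}$ and, whenever this ratio is at least $\sqrt2$, the map $L\mapsto b_{\alpha}(L)^2/L$ is increasing on the positive integers (because $\tfrac{L}{L+1}\cdot 2\ge 1$ for $L\ge 1$). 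These facts are all that the argument needs.

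For the first statement I would introduce the two gap functions $\Delta_{dir}(L)=U_{dir}-U_{GenF}(L)=L\beta_1-\beta_0\, b_{\alpha}(L)^2$ and $\Delta_{iter}(L)=U_{iter}-U_{GenF}(L)=(b_{\alpha}(N)^2-\sigma_D^2\beta_2)-\beta_0\, b_{\alpha}(L)^2-(N-L-1)\beta_1$, and then split on which of $U_{dir},U_{iter}$ is the binding constraint. If $U_{dir}\le U_{iter}$, choose $L=1$: the first term of the hypothesised minimum, $\beta_0<\beta_1/b_{\alpha}(1)^2$, makes $\Delta_{dir}(1)=\beta_1-\beta_0\, b_{\alpha}(1)^2>0$, hence $U_{GenF}(1)<U_{dir}\le U_{iter}$ and both required inequalities hold. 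If instead $U_{iter}<U_{dir}$, choose $L=N-1$: the second term of the minimum, $\beta_0<(b_{\alpha}(N)^2-\sigma_D^2\beta_2)/b_{\alpha}(N-1)^2$, makes $\Delta_{iter}(N-1)=(b_{\alpha}(N)^2-\sigma_D^2\beta_2)-\beta_0\, b_{\alpha}(N-1)^2>0$, hence $U_{GenF}(N-1)<U_{iter}<U_{dir}$. In both cases the chosen $L$ is an integer strictly between $0$ and $N$ (using $N\ge 2$), which is exactly the assertion; the point is that being below $\min\{U_{dir},U_{iter}\}$ at one extreme window length suffices, and the minimum over the two hypothesis terms guarantees whichever extreme is needed.

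For the second statement I would use $U_{dir}\approx U_{iter}$ to collapse the two-sided comparison to one: it is enough to show $U_{GenF}(L)<U_{dir}$ for every $1\le L\le N-1$, since then $U_{GenF}(L)<U_{dir}\approx U_{iter}$ as well. Now $U_{GenF}(L)<U_{dir}$ is equivalent to $\beta_0<L\beta_1/b_{\alpha}(L)^2$, and by the monotonicity of $b_{\alpha}(L)^2/L$ noted above the right side is minimised over $L\in\{1,\dots,N-1\}$ at $L=N-1$, with value $(N-1)\beta_1/b_{\alpha}(N-1)^2$. The approximation $(N-1)\beta_1+\sigma_D^2\beta_2\approx b_{\alpha}(N)^2$ turns the second hypothesis term into $\beta_0<(b_{\alpha}(N)^2-\sigma_D^2\beta_2)/b_{\alpha}(N-1)^2\approx(N-1)\beta_1/b_{\alpha}(N-1)^2$, i.e.\ exactly the worst-case requirement, so the comparison holds uniformly in $L$ and the claim follows.

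The step I expect to be the main obstacle is making the second statement genuinely rigorous rather than merely asymptotic: the $\approx$ must be replaced by an inequality with a controlled slack, and the monotonicity of $L\mapsto b_{\alpha}(L)^2/L$ is only immediate once the growth ratio $L_1+1+b_{\alpha}(k)L_2$ reaches $\sqrt2$, so either a mild extra hypothesis is invoked or the finitely many small-$L$ cases (where the ratio is only $\ge L_1+1$) are dispatched by a direct estimate using $b_{\alpha}(k)\ge b_{\alpha}(1)(L_1+1)^{k-1}$. The first statement, by contrast, is essentially the case split above and needs nothing beyond positivity and monotonicity of $b_{\alpha}$.
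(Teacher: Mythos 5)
Your first claim is handled correctly, and by essentially the paper's own route: the paper also compares $U_{GenF}$ at $L=1$ against $U_{dir}$ and at $L=N-1$ against $U_{iter}$, and your explicit case split on whether $U_{dir}\le U_{iter}$ or $U_{iter}<U_{dir}$ is a clean (in fact slightly more careful) way of turning those two endpoint comparisons into a single $L$ satisfying both inequalities.

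For the second claim, however, your argument has a genuine gap, and it is exactly the one you flag. You reduce correctly to showing $\beta_0 < L\beta_1/b_{\alpha}(L)^2$ for every $1\le L\le N-1$, but you then assert that the right-hand side is minimised at $L=N-1$ by monotonicity of $L\mapsto b_{\alpha}(L)^2/L$, which you only establish when the growth ratio $r(L)=L_1+1+L_2 b_{\alpha}(L)$ is at least $\sqrt2$ --- a condition not implied by the hypotheses (take $L_1=0$ and $L_2 b_{\alpha}(k)$ small: then $b_{\alpha}$ is nearly constant, $b_{\alpha}(L)^2/L$ is \emph{decreasing}, and the binding constraint sits at $L=1$, not $L=N-1$); your proposed fallback via $b_{\alpha}(k)\ge b_{\alpha}(1)(L_1+1)^{k-1}$ is vacuous in precisely that regime. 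The repair needs neither an extra hypothesis nor small-$L$ case work: writing $\phi(L)=L/b_{\alpha}(L)^2$, one has $\phi(L+1)\ge\phi(L)$ iff $(L+1)/L\ge r(L)^2$, and since $(L+1)/L$ is decreasing while $r(L)^2$ is nondecreasing, the increments of $\phi$ change sign at most once (increase then decrease), so $\min_{1\le L\le N-1}\phi(L)=\min\{\phi(1),\phi(N-1)\}$. Under $U_{dir}\approx U_{iter}$ the two terms of the hypothesised minimum are exactly $\beta_1\phi(1)$ and (approximately) $\beta_1\phi(N-1)$, so $\beta_0$ is below $\beta_1\phi(L)$ for \emph{all} $L$, which is the uniform statement the corollary asserts; this endpoint-minimality observation, using \emph{both} terms of the min rather than only the second, is what the paper's brief argument is implicitly relying on, and your write-up as it stands does not supply it.
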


\begin{remark}
Theorem \ref{thm:1} provides upper bounds on the sum of bias and variance terms, and Corollary \ref{corr:1} provides the conditions under which the upper bounds for GenF are provably lower. One of the possible scenarios these conditions are satisfied is when the iterative forecaster has low single-horizon bias (i.e., $b_{\alpha}(1)$), and a large prediction horizon $N$. Corollary \ref{corr:1} suggests that under certain conditions, GenF can provide potentially smaller MSE, which is verified in our experiments in Section \ref{PC}. Corollary \ref{corr:1} also gives conditions under which GenF yields a smaller upper bound on error for any $0<L<N$.
\end{remark} 

\begin{figure*}[!t]
\begin{center}
\begin{tikzpicture}[font=\fontfamily{lmtt}, scale = 0.95]
	\draw[->] [blue!5!black, thick] (-1, 2.6) -- (16.5, 2.6);	
	\node at (0,2) (c) {$X_{1}$};
	\node at (1,2) (c) {$X_{2}$};
	\node at (2.2,2) (c) {$\cdots$};
	\node at (3.2,2) (c) {$X_{M}$};
	\node[text = blue] at (5.9,2) (c) {$\widetilde{X}_{M+1}$};
	\node[text = blue] at (7.4,2) (c) {$\cdots$};
	\node[text = blue] at (8.8,2) (c) {$\widetilde{X}_{M+L}$};
	\node at (10.9,2) (c) {$\widetilde{X}_{M+L+1}$};
	\node at (12.2,2) (c) {$\cdots$};
	\node at (13.4,2) (c) {$\xbar{X}_{M+N}$};
	\draw (13.4,2) ellipse (0.8cm and 0.4cm);
	
	\node [black] at (1.9,2.9) (c) {\small Observation Window(M)};
	\node [black] at (7.8,2.9) (c) {\footnotesize Synthetic Window(L)};
	\node [black] at (13.5,2.9) (c) {\footnotesize Prediction Horizon(N)};
	\node [black] at (15.2,2.4) (c) { \small Time Step};
	\draw [blue!5!black, thick](-0.3, 1.65) -- (3.7, 1.65) -- (3.7, 2.4) -- (-0.3, 2.4) -- (-0.3, 1.65);	
	\draw [blue!5!black, dashed, thick](5.3, 1.65) -- (9.5, 1.65) -- (9.5, 2.4) -- (5.3, 2.4) -- (5.3, 1.65);

    \draw [thick,->,Blue,dashed, line width=0.5mm](5.9,1.6) [out=-70,in=-120] to  (7.2,1.6);
    \draw [thick,->,Blue,dashed, line width=0.5mm](7.3,1.6) [out=-70,in=-120] to  (8.6,1.6);
 \end{tikzpicture}
 \vspace{-12mm}
\end{center}
\end{figure*}
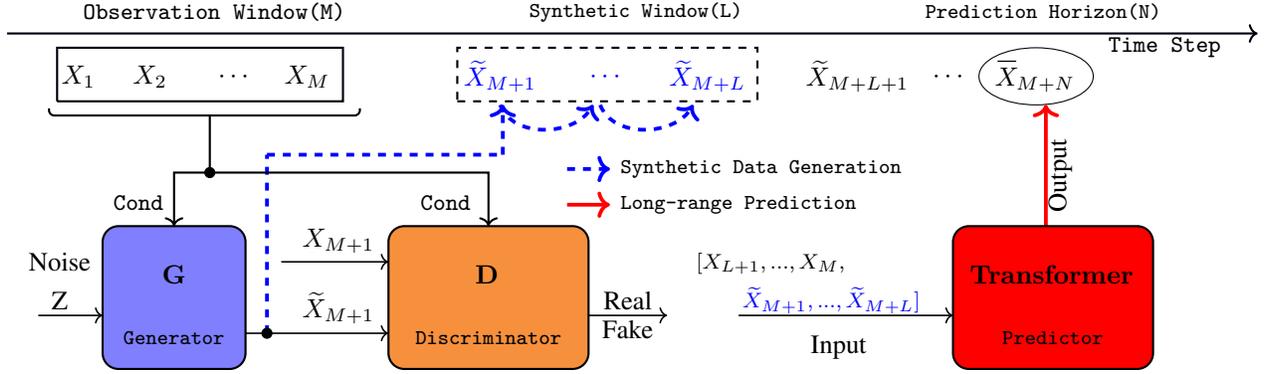
\begin{figure*}[!t]
\begin{center}
\hspace{-12mm}\begin{tikzpicture}[font=\fontfamily{lmtt},scale=0.95]
   \draw [thick,Blue,dashed, line width=0.5mm](-0.7,-0.5) to  (-0.7,2.0);
   \draw [thick,Blue,dashed, line width=0.5mm](-0.7, 2.0) to  (2.6,2.0); 
   \draw [thick,->, Blue,dashed, line width=0.5mm](2.6, 2.0) to  (2.6,2.7); 
   \draw [thick,->, Red, line width=0.5mm](10.2,0.5) to  (10.2,2.7);

   \draw[fill]  (-0.7,-0.5) circle (2pt);
   \draw[fill]  (-1.5, 1.75) circle (2pt);
   
    \draw[thick,black,->,Blue,dashed, line width=0.5mm] (3.5,1.8)--(4.1,1.8)node [right,black]{\footnotesize Synthetic Data Generation};
    \draw[thick,black,->,Red, line width=0.5mm] (3.5,1.3)--(4.1,1.3)node [right,black]{\footnotesize Long-range Prediction};
   
   \draw[line width=0.8pt,draw=black,fill=Blue!50,rounded corners=6pt](-3,-1) rectangle (-1,1) node [left=5cm,above=-0.9cm]{$\hspace{-19mm}\mathbf{G}$};
   \draw[line width=0.8pt,draw=black,fill=Orange_1,rounded corners=6pt](1,-1) rectangle (3.8,1) node [left=1.4cm,above=-0.9cm]{$\hspace{-27mm}\mathbf{D}$};
   
   \draw[line width=0.8pt,draw=black,fill=Red,rounded corners=6pt](8.9,-1) rectangle (11.7,1) node [left=1.4cm,above=-0.9cm]{$\footnotesize \hspace{-27mm} \mathbf{Transformer}$};

  \draw[line width=0.8pt, draw=black, <->](-2, 1)--(-2,1.75) -- (2.4, 1.75) -- (2.4, 1);
  
  \node[above=0.2cm]at (-2,-1){\footnotesize Generator};
  \node[above=0.2cm]at (2.4,-1){\footnotesize Discriminator};  
  \node[above=0.2cm]at (10.3,-1){\footnotesize Predictor};  
  \node[above=0.2cm]at (7.3,-1.2){Input};  
  \node[above=0.2cm, rotate = 90]at (10.7,1.5){Output};  
  %\node[above=0.2cm]at (-2.2,-1.9){CWGAN-TS Loss};  
   
  \draw[line width=0.8pt,draw=black,rounded corners=2pt](-1.5,1.75)--(-1.5,2.55)node[right = 1.5cm, above=-1cm]{};
  
  \node[above=0cm]at (-2.5,1.1){{\small Cond}};
  \node[above=0cm]at (1.8,1.1){{\small Cond}};

  \draw[line width=0.8pt,draw=black,fill=white,rounded corners=2pt](-3.75,2.65)--(-3.75,2.55)--(0.6,2.55)--(0.6,2.65);
      
   \node[above]at (-3.6,0.25){Noise};
   \node[above]at (-3.6,-0.3){Z};
   \node[above]at (0.3, 0.5){$X_{M+1}$};
   \node[above]at (0.3,-0.5){$\widetilde{X}_{M+1}$};
   \node[above]at (4.35,-0.3){Real};
     
  \draw[line width=0.6pt,draw=black,fill=white,->](-3.9,-0.25)--(-3,-0.25);
  \draw[line width=0.6pt,draw=black,fill=white,->](-1,-0.5)--(1,-0.5);
  \draw[line width=0.6pt,draw=black,fill=white,->](-0.5,0.5)--(1,0.5);
  \draw[line width=0.6pt,draw=black,fill=white,->](3.8,-0.25)--(4.9,-0.25);
  
  \node[above]at (4.35,-0.7){Fake};
  
  \node[above]at (6.35, 0.2){\footnotesize $[X_{L+1},..., X_{M},$};
  \node[above, text = blue]at (7.2, -0.35){\footnotesize $\widetilde{X}_{M+1},..., \widetilde{X}_{M+L}]$};

  \draw[line width=0.6pt,draw=black,fill=white,->](5.9,-0.25)--(8.9,-0.25);
  
  \end{tikzpicture}
  \vspace{2mm}
  \caption{Implementation of GenF (Left: CWGAN-TS; Right: Transformer based Predictor).}
  \label{GenF_Imp}
  \vspace{-4mm}
\end{center}
\end{figure*}

\subsection{Implementation of GenF}
\label{sec3.3}

We now detail the implementation of GenF via three components described as follows.

\noindent {\bf 1) CWGAN-TS: Synthetic Data Generation. } For the iterative forecasting part (i.e., $B_{iter}(L) + V_{iter}(L)$) of GenF in \eqref{eq2}, the bias term (i.e., $B_{iter}(L)$) tends to be more dominant due to the nature of iterative forecasting. Therefore, we should select a low bias model for the iterative forecasting part in \eqref{eq2_main}, so as to obtain a smaller MSE. A recent work \cite{zhao2018bias} evaluates the generative bias of a diver range of models and their results suggest that GAN based models tend to have a relatively lower generative bias then other models studied (e.g., VAE). Similar results are reported in \cite{hu2019exploring} as well. This motivates the use of a GAN based model in GenF for synthetic data generation.

We propose a Conditional Wasserstein GAN for synthetic time series data generation \cite{arjovsky2017wasserstein,mirza2014conditional,gulrajani2017improved}, called CWGAN-TS. The the unsupervised loss of CWGAN-TS is as follows.
\begin{align} \label{loss_1}
\mathcal{L}_{U}  = & \underset{\bar{X}_{M+1}\sim P_g}{\mathbb{E}}\!\!\!\!\!\!\!\![D(\xbar{X}_{M+1}|Y)] - \underset{X_{M+1} \sim P_r}{\mathbb{E}}\!\!\!\!\!\!\!\![D(X_{M+1}|Y)] \nonumber \\
& + \underset{\widehat{X}_{M+1} \sim P_{\widehat{X}}}{\lambda\mathbb{E}}\!\!\!\!\!\!\!\![(||\nabla_{\widehat{X}} D(\widehat{X}_{M+1}|Y)||_{2} - 1)^2],
\vspace*{3mm}\end{align}
where $Y = \{X_1, \cdots, X_M\}$ is the condition, $X_{M+1} \sim P_r$ is the real data at time step $M+1$, $\xbar{X}_{M+1} \sim P_g$ is the generated data for time step $M+1$ and $\widehat{X}_{M+1} \sim P_{\widehat{X}}$ is sampled from $\xbar{X}_{M+1}$ and $X_{M+1}$ with $\epsilon$ uniformly sampled between 0 and 1, i.e., $\widehat{X}_{M+1} = \epsilon \xbar{X}_{M+1} + (1 - \epsilon) X_{M+1}$ with 0 $\leq \epsilon \leq $1. The first two terms in \eqref{loss_1} correspond to Wasserstein distance which improves learning performance over the Jensen-Shannon divergence used in the original GAN. The third term is the gradient penalty, corresponding to the 1-Lipschitz constraint \cite{gulrajani2017improved}. To control the generation of synthetic data, we apply two terms to CWGAN-TS during training: (i) the past observation $Y$ as the condition. (ii) an additional supervised loss as follows.
\vspace*{1mm}\begin{align} \label{loss_2}
\mathcal{L}_{S} = \mathcal{L}_{U} + \eta ||X_{M+1} - \xbar{X}_{M+1}||_{2},
\vspace*{1mm}\end{align}
where $\eta$ is a hyperparameter that balances the two losses. Importantly, this is, in addition to the unsupervised minmax game played over classification accuracy, the generator additionally minimizes the supervised loss. By combining the objectives in this manner, CWGAN-TS aims to generate accurate synthetic data while preserving the temporal dynamics between conditioning input and generated data. The loss function of CWGAN-TS is further evaluated via an ablation study.

\begin{table*}[!h]
\hspace*{2mm}\begin{minipage}{0.5\textwidth}
{\renewcommand{\arraystretch}{1.3}
\setlength{\tabcolsep}{9pt}
\centering
\begin{tabular}{|c|c|c|c|}
\hline
\multicolumn{3}{|c|}{\small \bf Generative Forecasting (GenF)}\\
\hline
\multicolumn{2}{|c|}{\small CWGAN-TS} & {\small Predictor} \\
\hline
{\small Generator} & {\small Discriminator} & { - } \\
\hline
{\small $(S, M+1, K)$ } & {\small $ (S, M+1, K) $ } & {\small $(S, M, K)$} \\
\hline
{\small LSTM (5)} & {\small LSTM (5)} & {\small Atten En (3)}\\ \hline
{\small Linear (12)} & {\small Linear (12)} & {\small Atten En (3) } \\ \hline
{\small Linear (K)} & {\small Linear (4)} & {\small Atten De (3)} \\ \hline
{\small Reshape ($S$, 1, K)} & {\small Linear (1)} & {\small Atten De (3)} \\ \hline
{\small - } & {-} & {\small Linear (1)} \\ \hline
\end{tabular}}
\end{minipage}%
\hspace*{8mm}\begin{minipage}{0.5\textwidth}
{\renewcommand{\arraystretch}{1.07}
\footnotesize
\setlength{\tabcolsep}{10pt}
\centering
\begin{tabular}{|c|}
\hline
{\bf LSTNet} \\ skip-length p = 5, gradient clipping = 10, \\ epoch = 1000, dropout = 0.1, batchsize  = 64. \\ \hline
{\bf TLSTM} \\ learning rate (lr) decay = 0.8, lr = 1$\texttt{e}$-3, \\dropout = 0.1, batchsize = 64, epoch = 1000. \\  \hline
{\bf DeepAR} \\ LSTM (20), embedding size = 10 \\ batchsize = 64, LSTM dropout = 0.1, lr = 1$\texttt{e}$-3. \\ \hline
{\bf LogSparse} \\ embedding size = 16, kernel size = 9 \\ batchsize = 64, epoch = 1000, lr = 3$\texttt{e}$-3. \\ \hline
{\bf Informer} \\ Encoder/Decoder Layers = 2, heads = 4 \\ batchsize = 32, dropout = 0.05, lr = 1$\texttt{e}$-3. \\ \hline
\end{tabular}}
\end{minipage}
\vspace{2mm}
\caption{(left) Network configuration of GenF. The fourth row represents the shape of input variables and the parameter in Linear($\cdot$) is the number of output units. The parameter in LSTM($\cdot$) is the hidden state size. The parameter in Atten En and Atten De is the number of heads in multi-head encoder and multi-head decoder. (right) The training details of SOTA methods.}
\label{Fig:MI}
\end{table*}

\noindent {\bf 2) Transformer: Long-range Prediction.} Several recent works \cite{li2019enhancing,informer,wu2020adversarial,wu2020deep,farsani2021transformer,lim2021temporal,tang2021probabilistic} have shown the great potential of transformer in time series forecasting. We briefly introduce its architecture here and refer interested readers to \cite{vaswani2017attention} for more details.

In the self-attention layer, a multi-head self-attention sublayer takes input {\bf Y} (i.e., time series data and its positioning vectors) and computes the attention function simultaneously using query matrices: {\bf Q$_h$ = YW$^Q_h$}, key matrices {\bf K$_h$ = YW$^K_h$}, and value matrices {\bf V$_h$ = YW$^V_h$}. Note that {\bf W$^Q_h$}, {\bf W$^K_h$} and {\bf W$^V_h$} are learnable parameters. The scaled dot-product attention computes a sequence of vector outputs:
\begin{align}
{\bf O}_h  &= {\mathrm{Attention}}({\bf Q}_h, {\bf K}_h, {\bf V}_h) \nonumber \\
& = {\mathrm{softmax}}(\frac{{\bf Q}_h {\bf K}_h^T}{\sqrt{d_k}} \cdot {\bf M}) {\bf V}_h
\end{align}
where $\frac{1}{\sqrt{d_k}}$ is a scaled factor and {\bf M} is a mask matrix. In GenF, we concatenate {\bf O}$_h$ ($h = 1, \cdots, H$ and $H$ is number of heads) and pass the concatenation through several fully connected layers before arriving at the final output. The architecture details can be found in Table \ref{Fig:MI}.

\noindent {\bf 3) The ITC Algorithm.} The datasets studied in this paper contain time series data for different patients, countries and so on. In the following, we refer to them as units. Since GenF contains two steps, synthetic data generation and long-range forecasting, it requires two independent datasets: datasets $\mathbb{G}$ and $\mathbb{P}$ to train the CWGAN-TS and the transformer based predictor, respectively. It is possible to randomly split the entire training dataset at unit level into datasets $\mathbb{G}$ and $\mathbb{P}$, but the resulting datasets may not represent the entire training dataset well. We address this issue by suggesting an information theoretic clustering (ITC) algorithm based on Mutual Information (MI), denoted by $I (X;Y)$, which is a well-studied measure from information theory \cite{cover2006elements} that quantifies the dependency between random variables $X$ and $Y$. 

The ITC algorithm aims to select representative training subsets to better train the CWGAN-TS and the transformer based predictor. It consists of three steps:(i) Assign a score to each unit via the scoring function $J(P_i) = \sum_{P_j \in \mathbb{D}, P_j \neq P_i }^{} I(P_i, P_j)$, where $P_i$ refers to the candidate unit and $\mathbb{D} = \{P_1, P_2, \cdots  \}$ is the dataset containing all units. To estimate MI (i.e., $I(P_i, P_j)$), we use a nearest neighbor based approach called KSG estimator \cite{Kraskov2003} as all features studied are continuous variables. (ii) Divide all units into $\gamma$ groups based on the descending order of all scores, where $\gamma$ is a tunable parameter. The units with similar scores will be grouped together and units within the same group tend to be highly dependent on each other. (iii) Randomly sample from each subgroup. This is equivalent to selecting representative units of each subgroup. Random proportional sampling from all groups gives a new training dataset $\mathbb{G}$ and the remaining units form the new training dataset $\mathbb{P}$. In such a manner, we argue that we can select more representative units for better training. The effect of the ITC algorithm is further evaluated in an ablation study in Section \ref{SD}.

\section{Performance Evaluation}
\label{PE}
We summarize the dataset information in Section \ref{DataDes} and describe the experimental setup in Section \ref{ES}. Next, in Section \ref{PC}, we compare the performance of GenF to benchmark methods. Lastly, we conduct an ablation study to evaluate the effectiveness of our framework in Section \ref{SD}.

\subsection{Real-world Datasets}
\label{DataDes}
We shortlist five public time series datasets comprising different time series patterns (e.g., periodical, monotonic) and time intervals (e.g., hourly, daily, annual) from three popular domains (healthcare, environment and energy consumption).
We summarize each dataset as follows.
\begin{enumerate}[itemsep = 2mm,leftmargin=5mm, topsep=2pt]

\item The Vital Sign dataset from MIMIC-III \cite{MIMIC}, which contains 500 patients in the MIMIC-III Clinical database. For each patient, we extract 6 features: heart rate (bpm), respiratory rate, blood oxygen saturation (\%), body temperature ($\degree$F), systolic and diastolic blood pressure (mmHg). The vital signs are recorded at a hourly interval over a duration of 144 hours on average. 

\item The Multi-Site Air Quality dataset from UCI \cite{Dua:2019}, which includes air pollutants data from 12 sites. For each site, we extract the hourly record of PM10, $\text{SO}_{2}$, $\text{NO}_{2}$, $\text{O}_{3}$, PM2.5 and CO. We note that all features are in units of $\text{ug/m}^3$ and each site has 35,000 records on average. 

\item The World Energy Consumption dataset from World Bank \cite{worldenergy2019} , which contains data from 128 countries and each country contains three annual energy consumption indicators: electricity (kWh/capita), fossil fuel (\% of total) and renewable energy (\% of total) from 1971 to 2014.

\item The Greenhouse Gas dataset from UCI \cite{Dua:2019}, which records the greenhouse gas concentrations at 2921 locations. The data are spaced 6 hours apart over a period of 3 months.

\item The Household Electricity Consumption dataset \cite{energy}, which contains the cumulative daily electricity consumption (kWh) for 995 households over a duration of 3 years.

\end{enumerate}

For each dataset, a small amount (i.e., less than 5\%) of missing values are imputed using the last historical readings. Moreover, we scale all variables to [0,1] and reshape all scaled data via a sliding window, resulting in a dataset $\mathbb{D} \in \mathbb{R}^{S \times M \times K}$, where $S$ is the number of samples, $K$ is number of features and $M$ is the observation window length.

\begin{algorithm*}[!t]
\algsetup{linenosize=\normalsize}
\normalsize
    \caption{\normalsize The Algorithm of Generative Forecasting } 
    \label{algorithm2}
    \begin{algorithmic}[1]
    \REQUIRE {\fontfamily{bch}\selectfont (i)} Time series dataset $\mathbb{D} \in \mathbb{R}^{S \times M \times K}$; {\fontfamily{bch}\selectfont (ii)} $\lambda$, $\beta$=5, 1 (see (1)); {\fontfamily{bch}\selectfont (iii)} batch size = 64; {\fontfamily{bch}\selectfont (iv)} Adam (lr = 0.001).
    \STATE Randomly split $\mathbb{D}$ (unit level): training dataset $\mathbb{T} \in\mathbb{R}^{S_1 \times M \times K}$ (60\%), i.e., $\mathbb{T}:  \{ X_{1}^{i}, \cdots, X_{M}^{i} \}$ for  $i = 1, \cdots, S_{1}$ and $X \in \mathbb{R}^{K}$, test dataset $\mathbb{Q} \in \mathbb{R}^{S_2 \times M \times K}$ (20\%), validation dataset $\in \mathbb{R}^{S_3 \times M \times K}$ (20\%).
    \STATE Use ITC algorithm to split $\mathbb{T}$ (unit level): training dataset $\mathbb{H} \in\mathbb{R}^{S_4 \times M \times K}$ (50\%), training dataset $\mathbb{P} \in\mathbb{R}^{S_5 \times M \times K}$ (50\%).
    \STATE Training dataset $\mathbb{H}$ $\xrightarrow[\text{}]{\text{train}}$ the Generator (G) and Discriminator (D) in CWGAN-TS via loss function (1).
    \FOR {$j = 1$ to $L$}
    	\STATE (i) Synthetic Data Generation: G($\mathbb{P}$) = $\mathbb{Y}_1 \in \mathbb{R}^{S_5 \times 1 \times K}$, i.e., $\mathbb{Y}_{1}:  \{\xbar{X}_{M+j}^{i} \}$ for  $i = 1, \cdots, S_{5}$; G($\mathbb{Q}$) =  $\mathbb{Y}_2 \in \mathbb{R}^{S_2 \times 1 \times K}$.
	\STATE (ii) Prune the oldest observation: $\mathbb{P} \leftarrow \mathbb{P}\backslash X_j^i$, i.e., $\{ X_{j+1}^{i}, \cdots, X_{M+j-1}^i \}$ for  $i = 1, \cdots, S_{5}$; $\mathbb{Q} \leftarrow \mathbb{Q} \backslash X_j^i. $
	\STATE (iii) Concatenation:  $\mathbb{P} \leftarrow \mathbb{P} \oplus \mathbb{Y}_1,$ i.e., $\{ X_{j+1}^{i},\cdots, X_{M+j-1}^i, \xbar{X}_{M+j}^{i} \}$ for  $i = 1,\cdots, S_{5}$; $\mathbb{Q} \leftarrow \mathbb{Q} \oplus \mathbb{Y}_2.$ 	
    \ENDFOR
    \STATE Training Dataset $\mathbb{P}$ $\xrightarrow[\text{}]{\text{train}}$ transformer based Predictor, Testing Dataset $\mathbb{Q}$ $\xrightarrow[\text{}]{\text{test}}$ transformer based Predictor. \\
    \STATE {\bf Return} MSE, MAE and sMAPE between predicted values and real values.
  \end{algorithmic}
\end{algorithm*}

\subsection{Experiment Setup \& Parameter Tuning}
\label{ES}
In the experiment, the dataset $\mathbb{D}$ is randomly split into three subsets at unit level: training dataset $\mathbb{T} \in \mathbb{R}^{S_1 \times M \times K}$ (60\%), test dataset $\mathbb{Q} \in \mathbb{R}^{S_2 \times M \times K}$ (20\%) and validation dataset $\in \mathbb{R}^{S_3 \times M \times K}$ (20\%). We note that both GenF and the benchmark methods are trained using the training dataset $\mathbb{T}$, and the test dataset $\mathbb{Q}$ is used to evaluate the performance. {\bf To ensure fair comparison, we conduct grid search over all tunable hyper-parameters and possible configurations using the validation dataset}. We highlight that all methods share the same grid search range and step size. Specifically, some key parameters are tuned as follows. (i) the hidden state size of LSTM is tuned from 5 to 100 with step size of 5. (ii) the size of fully connected layers are tuned from 1 to 10 with step size of 1. (iii) the number of heads in mutli-head self-attention layer are tuned from 1 to 10 with step size of 1.

We now provide a list of tuned parameters for predicting systolic blood pressure using the MIMIC-III Vital Signs dataset. In terms of the classical models: (i) ARIMA (2,0,1) is trained using the past $M$ historical values of a single feature (e.g., systolic blood pressure). (ii) for the canonical transformer, it contains two encoder layers, two decoder layers and three heads in the multi-head self-attention layer. (iii) for LSTM, we stack two LSTM layers (each LSTM with hidden size of 10) and two fully connected layers (with size of 10 and 1, respectively). This LSTM based neural network is directly trained using training dataset $\mathbb{T}$ for 1000 epochs using Adam \cite{adam} with a learning rate of 0.001, in batches of 64. As for the five SOTA methods, we refer to the source code released by their authors and some key parameters are as follows: LSTNet has a skip-length of 5 and a gradient clipping of 10, TLSTM uses a dropout of 0.1 and LogSparse uses a kernel size of 9 and an embedding size of 16. In terms of GenF, we use the LSTM and fully connected layer to implement the CWGAN-TS. The transformer based predictor consists of two decoder layers and two encoder layers, with three heads. For the detailed algorithm of GenF and other training details on five SOTA methods, the details are provided in Algorithm \ref{algorithm2} and Table \ref{Fig:MI}. {\bf  Furthermore, we use Tesla V100 devices for our experiments, and the source code will be released for reproducibility at the camera-ready stage.}

In the experiments, we compare the performance of GenF to two classical models and five SOTA methods. The Mean Squared Error (MSE) and Mean Absolute Error (MAE) are used to evaluate the performance, where the former captures both the variance and bias of the predictor, and the latter is useful to understand whether the size of the error is of concern or not. Furthermore, a scale invariant error metric called symmetric Mean Absolute Percentage Error (sMAPE) is also used. In Table \ref{performance_short}, we show the performance of several variants of GenF (i.e., GenF-3), where the 'X' in GenF-X represents the value of the synthetic window length $L$.

\begin{table*}[!t]
\small
\centering
\setlength{\tabcolsep}{11pt}
{\renewcommand{\arraystretch}{1.0}
\begin{tabular}{ll|cc|cc|cc|ccc}
\toprule
\multicolumn{2}{c|}{{\fontfamily{lmtt}\selectfont Prediction Horizon}} & \multicolumn{2}{c|}{{\fontfamily{lmtt}\selectfont t + 8}} & \multicolumn{2}{c|}{{\fontfamily{lmtt}\selectfont t + 12}} & \multicolumn{2}{c|}{{\fontfamily{lmtt}\selectfont t + 30}} & \multicolumn{2}{c}{{\fontfamily{lmtt}\selectfont t + 60}} \\ \midrule
& {\fontfamily{lmtt}\selectfont Metrics} &{\fontfamily{lmtt}\selectfont MAE} & {\fontfamily{lmtt}\selectfont sMAPE} &{\fontfamily{lmtt}\selectfont MAE} & {\fontfamily{lmtt}\selectfont sMAPE} &{\fontfamily{lmtt}\selectfont MAE} & {\fontfamily{lmtt}\selectfont sMAPE} &{\fontfamily{lmtt}\selectfont MAE} & {\fontfamily{lmtt}\selectfont sMAPE}\\
%\midrule
\cmidrule(lr){1-2} \cmidrule(lr){3-4}\cmidrule(lr){5-6}\cmidrule(lr){7-8}\cmidrule(lr){9-10}

 & {\fontfamily{lmtt}\selectfont ARIMA}      & 8.3$\pm${\tiny 1.3} & 7.3$\pm${\tiny 1.1} & 9.8$\pm${\tiny 0.8} & 8.4$\pm${\tiny 0.7} & 16.2$\pm${\tiny 1.7} & 13.1$\pm${\tiny 1.4} & 18.9$\pm${\tiny 2.7} & 14.5$\pm${\tiny 2.4} \\  
 
 \parbox[t]{2mm}{\multirow{8}{*}{\rotatebox[origin=c]{90}{\vspace*{-5mm}{\bf SOTA}}}} & {\fontfamily{lmtt}\selectfont LSTM} & 7.3$\pm${\tiny 0.6} & 6.4$\pm${\tiny 0.3} & 8.9$\pm${\tiny 1.1} & 7.6$\pm${\tiny 0.9}  & 13.9$\pm${\tiny 1.3} & 11.0$\pm${\tiny 0.9}   & 17.8$\pm${\tiny 2.9} & 13.3$\pm${\tiny 1.9}    \\  \midrule

 & {\fontfamily{lmtt}\selectfont TLSTM}          & 6.8$\pm${\tiny 0.4}    & 5.7$\pm${\tiny 0.4} & 8.2$\pm${\tiny 0.6}   & 7.1$\pm${\tiny 0.8} & 12.3$\pm${\tiny 1.2}   & 10.9$\pm${\tiny 1.1} & 15.0$\pm${\tiny 2.3} & 12.7$\pm${\tiny 1.5}  \\ 
 & {\fontfamily{lmtt}\selectfont LSTNet}         & 6.9$\pm${\tiny 0.6}    & 5.8$\pm${\tiny 0.4} & 8.2$\pm${\tiny 0.7}  & 7.0$\pm${\tiny 0.7}& 12.0$\pm${\tiny 1.3}  & 10.6$\pm${\tiny 1.3} & 14.2$\pm${\tiny 2.6}  & 11.5$\pm${\tiny 0.9} \\ 
  & {\fontfamily{lmtt}\selectfont DeepAR}       & 6.8$\pm${\tiny 0.5} & 6.0$\pm${\tiny 0.6} & 8.4$\pm${\tiny 1.1} & 7.2$\pm${\tiny 0.9} & 12.9$\pm${\tiny 1.1} & 11.2$\pm${\tiny 0.9} & 16.3$\pm${\tiny 2.8} & 12.9$\pm${\tiny 1.7} \\ 
  & {\fontfamily{lmtt}\selectfont Informer}       & 6.5$\pm${\tiny 0.4} &  5.3$\pm${\tiny 0.5} & 7.9$\pm${\tiny 0.7} & 6.6$\pm${\tiny 0.7} & 11.8$\pm${\tiny 1.5} & 10.1$\pm${\tiny 1.4} & 14.1$\pm${\tiny 2.7} & 11.5$\pm${\tiny 1.6} \\ 
 & {\fontfamily{lmtt}\selectfont LogSparse}      & 6.6$\pm${\tiny 0.7}   & 5.5$\pm${\tiny 0.3} & 8.1$\pm${\tiny 0.5}   & 6.9$\pm${\tiny 0.6}  & 11.6$\pm${\tiny 1.0}  & 9.7$\pm${\tiny 0.9}  & 14.5$\pm${\tiny 2.8} & 11.3$\pm${\tiny 1.2}   \\ \midrule      
 & {\fontfamily{lmtt}\selectfont GenF-3 (Ours)}          & {\bf 6.2$\pm${\tiny 0.4}}  &  {\bf 5.1$\pm${\tiny 0.5}} & 7.5$\pm${\tiny 0.6}  & 6.3$\pm${\tiny 0.5}   & 11.2$\pm${\tiny 1.4} & 9.2$\pm${\tiny 1.0}  & 13.5$\pm${\tiny 2.7} & 10.8$\pm${\tiny 1.9}  \\ 
 & {\fontfamily{lmtt}\selectfont GenF-6 (Ours)}          & 6.3$\pm${\tiny 0.4}  &   5.2$\pm${\tiny 0.3} & {\bf 7.4$\pm${\tiny 0.7}}  & {\bf 6.2$\pm${\tiny 0.6}}   & {\bf 10.7$\pm${\tiny 1.1}} & {\bf 8.9$\pm${\tiny 0.9}}  & {\bf 12.6$\pm${\tiny 2.5}} &  {\bf 10.2$\pm${\tiny 1.4}}  \\ 
 
\bottomrule
\end{tabular}}
\caption{Performance (MAE, sMAPE (\%) $\pm$ standard deviation over 5 runs) of {\bf predicting blood pressure using the MIMIC-III Vital Sign dataset}. The X in GenF-X is the synthetic window length $L$.} 
\vspace{4mm}
\label{performance_short}
\end{table*}

\begin{table*}[!ht]
\small
\centering
\setlength{\tabcolsep}{10pt}
{\renewcommand{\arraystretch}{1.0}
\begin{tabular}{ll|cc|cc|cc|cc}
\toprule
&{\fontfamily{lmtt}\selectfont Prediction Horizon} & \multicolumn{2}{c|}{{\fontfamily{lmtt}\selectfont t + 8}} & \multicolumn{2}{c|}{{\fontfamily{lmtt}\selectfont t + 12}} & \multicolumn{2}{c|}{{\fontfamily{lmtt}\selectfont t + 30}} & \multicolumn{2}{c}{{\fontfamily{lmtt}\selectfont t + 60}}  \\ \midrule 
 &{\fontfamily{lmtt}\selectfont Metrics} & {\fontfamily{lmtt}\selectfont MAE} & {\fontfamily{lmtt}\selectfont sMAPE} &{\fontfamily{lmtt}\selectfont MAE} & {\fontfamily{lmtt}\selectfont sMAPE} &{\fontfamily{lmtt}\selectfont MAE} & {\fontfamily{lmtt}\selectfont sMAPE} &{\fontfamily{lmtt}\selectfont MAE} & {\fontfamily{lmtt}\selectfont sMAPE} \\ 
\cmidrule(lr){1-2} \cmidrule(lr){3-4}\cmidrule(lr){5-6}\cmidrule(lr){7-8}\cmidrule(lr){9-10}

 &{\fontfamily{lmtt}\selectfont ARIMA}      & 21.3$\pm${\tiny 1.7} & 26$\pm${\tiny 6} & 27.8$\pm${\tiny 1.5} & 29$\pm${\tiny 9} & 29.2$\pm${\tiny 2.6} & 34$\pm${\tiny 11} & 31.9$\pm${\tiny 1.8} & 35$\pm${\tiny 9} \\  
 
 &{\fontfamily{lmtt}\selectfont LSTM} & 19.7$\pm${\tiny 1.5} & 22$\pm${\tiny 3} & 24.6$\pm${\tiny 1.4} & 27$\pm${\tiny 8}  & 25.9$\pm${\tiny 1.7} & 31$\pm${\tiny 9}   & 29.4$\pm${\tiny 3.1} & 33$\pm${\tiny 6}    \\  \midrule

\parbox[t]{2mm}{\multirow{5}{*}{\rotatebox[origin=c]{90}{\vspace*{-5mm}{\bf SOTA}}}}& {\fontfamily{lmtt}\selectfont TLSTM}        & 18.8$\pm$ {\tiny 1.0}   & 20$\pm${\tiny 5} & 20.7$\pm${\tiny 1.9}  & 22$\pm${\tiny 8} & 24.0$\pm${\tiny 2.0}    & 28$\pm${\tiny 9} & 27.5$\pm${\tiny 2.3} & 31$\pm${\tiny 9} \\ 
&{\fontfamily{lmtt}\selectfont LSTNet}    & 17.8$\pm$ {\tiny 2.0}    & 18$\pm${\tiny 4} &19.9$\pm${\tiny 1.8}  & 21$\pm${\tiny 9} & 23.6$\pm${\tiny 2.2}       & 25$\pm${\tiny 6} & 27.0$\pm${\tiny 2.5} & 28$\pm${\tiny 10}   \\ 
&{\fontfamily{lmtt}\selectfont DeerAR}     & 19.2$\pm$ {\tiny 1.3}    & 21$\pm${\tiny 5} & 22.4$\pm${\tiny 2.0}  & 25$\pm${\tiny 7} & 24.3$\pm${\tiny 2.1}  & 29$\pm${\tiny 7} & 28.2$\pm${\tiny 2.9} & 30$\pm${\tiny 8}  \\ 
&{\fontfamily{lmtt}\selectfont Informer}     & 18.2$\pm$ {\tiny 1.4}    & 20$\pm${\tiny 3} & 20.4$\pm${\tiny 1.2}  & 22$\pm${\tiny 8} & 23.1$\pm${\tiny 1.9}  & 24$\pm${\tiny 5} & 26.1$\pm${\tiny 2.5} & 29$\pm${\tiny 7}  \\ 
&{\fontfamily{lmtt}\selectfont LogSparse}     & 18.1$\pm$ {\tiny 1.5}    & 20$\pm${\tiny 4} & 21.5$\pm${\tiny 1.5}  & 24$\pm${\tiny 6} & 23.5$\pm${\tiny 2.3}  & 26$\pm${\tiny 5} & 26.7$\pm${\tiny 2.6} & 29$\pm${\tiny 8}  \\ 
\midrule
&{\fontfamily{lmtt}\selectfont GenF-2 (Ours)}    & {\bf 16.3$\pm$ {\tiny 1.8}}  & 16$\pm${\tiny 4}  & {\bf 18.0$\pm${\tiny 1.7}}       & {\bf 19$\pm${\tiny 6}}                  & 20.5$\pm${\tiny 2.2} & 23$\pm${\tiny 9} & 24.2$\pm${\tiny 2.5} & 27$\pm${\tiny 12}  \\ 
&{\fontfamily{lmtt}\selectfont GenF-3 (Ours)}    & 16.5$\pm$ {\tiny 1.8}    & 16$\pm${\tiny 5}  &18.2$\pm${\tiny 2.0}  & 19$\pm${\tiny 4} & 20.3$\pm${\tiny 2.4}          & 23$\pm${\tiny 8} & 24.0$\pm${\tiny 2.4}  & 26$\pm${\tiny 9} \\ 
&{\fontfamily{lmtt}\selectfont GenF-6 (Ours)}    & 16.9$\pm$ {\tiny 1.3}   & 17$\pm${\tiny 3}  &18.5$\pm${\tiny 1.9}    &  19$\pm${\tiny 3} & {\bf 19.7$\pm${\tiny 2.0}} & {\bf 22$\pm${\tiny 5}} & {\bf 22.9$\pm${\tiny 1.8}} & {\bf 25$\pm${\tiny 8}} \\
\bottomrule
\end{tabular}}
\caption{Performance (MAE, sMAPE (\%) $\pm$ standard deviation over 5 runs) of {\bf predicting $\text{NO}_{2}$ emission using the Multi-Site Air Quality dataset}. The bold indicates the best performance.}
\label{performance_2}
\end{table*}

\subsection{Performance Comparison}
\label{PC}

{\bf (1) GenF VS SOTA Methods.} In Table \ref{performance_short}, we demonstrate that GenF greatly outperforms all methods studied for predicting blood pressure on the Vital Sign dataset. For example, the MAE of GenF-3 at $t + 8$ is 8.8\% better than the classical seq2seq based method (TLSTM), 10.1\% better than the attention based method (LSTNet), 5\% better than the transformer based method (Informer). Interestingly, as the prediction horizon grows, generating more synthetic data could be helpful, leading to better forecasting performance. Specifically, comparing to the best performing benchmark, GenF-6 achieves an improvement of 7.7\% at $t + 30$ and 10.6\% at $t + 60$. Similar performance trends can be observed using the other four datasets (see Tables \ref{performance_1} - \ref{performance_4}). As an example, in Table \ref{performance_2}, where we summarize the performance of predicting $\text{NO}_{2}$ emission using the Multi-Site Air Quality dataset, we observe that GenF-6 obtains a 12.2\% lower MAE than Informer.

{\bf (2) Complexity Comparison.} We note that GenF is essentially a transformer based method as it uses a shallow transformer based predictor. When predicting the blood pressure, the parameter count for GenF and the other two transformer based SOTA methods are: (i) GenF: 9.0K (CWGAN-TS: 3K; Transformer based Predictor: 6K), (ii) Informer: 10.6K and (iii) LogSparse: 17.9K. GenF uses 15\% and 50\% less parameters than Informer and LogSparse, respectively, but has better performance than both. These SOTA methods aim to better capture long-range dependencies with deep transformers. GenF achieves the same goal by extending the existing time series with synthetic data. As a result, our shallow transformer can achieve better performance than deep transformers. In fact, the proposed GenF is a general framework and is flexible enough to support any model as predictor. The effect of using deep transformers as predictor is discussed in Section \ref{discussion}.

\begin{table*}[!ht]
\small
\centering
\setlength{\tabcolsep}{10pt}
\begin{tabular}{ll|cc|cc|cc|cc}
\toprule
&{\fontfamily{lmtt}\selectfont Prediction Horizon} & \multicolumn{2}{c|}{{\fontfamily{lmtt}\selectfont t + 8}} & \multicolumn{2}{c|}{{\fontfamily{lmtt}\selectfont t + 12}} & \multicolumn{2}{c|}{{\fontfamily{lmtt}\selectfont t + 30}} & \multicolumn{2}{c}{{\fontfamily{lmtt}\selectfont t + 60}}  \\ \midrule 
 &{\fontfamily{lmtt}\selectfont Metrics} & {\fontfamily{lmtt}\selectfont MAE} & {\fontfamily{lmtt}\selectfont sMAPE} &{\fontfamily{lmtt}\selectfont MAE} & {\fontfamily{lmtt}\selectfont sMAPE} &{\fontfamily{lmtt}\selectfont MAE} & {\fontfamily{lmtt}\selectfont sMAPE} &{\fontfamily{lmtt}\selectfont MAE} & {\fontfamily{lmtt}\selectfont sMAPE} \\ 
\cmidrule(lr){1-2} \cmidrule(lr){3-4}\cmidrule(lr){5-6}\cmidrule(lr){7-8}\cmidrule(lr){9-10}

 &{\fontfamily{lmtt}\selectfont ARIMA}      & 6.3$\pm${\tiny 1.2} & 7.6$\pm${\tiny 0.9} & 7.1$\pm${\tiny 1.3} & 8.8$\pm${\tiny 1.4} & 12.4$\pm${\tiny 1.9} & 11.2$\pm${\tiny 1.8} & 13.5$\pm${\tiny 2.2} & 12.4$\pm${\tiny 2.3} \\  
 
 &{\fontfamily{lmtt}\selectfont LSTM} & 5.9$\pm${\tiny 0.9} & 7.2$\pm${\tiny 0.7} & 6.6$\pm${\tiny 1.1} & 8.1$\pm${\tiny 1.2}  & 11.3$\pm${\tiny 1.4} & 10.6$\pm${\tiny 1.4}   & 12.1$\pm${\tiny 1.7} & 11.6$\pm${\tiny 1.9}    \\  \midrule

\parbox[t]{2mm}{\multirow{5}{*}{\rotatebox[origin=c]{90}{\vspace*{-5mm}{\bf SOTA}}}}& {\fontfamily{lmtt}\selectfont TLSTM}        & 5.8$\pm${\tiny 0.8}   & 7.0$\pm${\tiny 0.6} & 6.5$\pm${\tiny 0.9}  & 7.9$\pm${\tiny 1.1} & 11.0$\pm${\tiny 1.2}    & 10.4$\pm${\tiny 1.2} & 12.4$\pm${\tiny 1.8} & 11.8$\pm${\tiny 2.0} \\ 
&{\fontfamily{lmtt}\selectfont LSTNet}    & 6.0$\pm${\tiny 0.9}    & 6.8$\pm${\tiny 0.5} &6.6$\pm${\tiny 1.0}  & 8.1$\pm${\tiny 1.2} & 11.5$\pm${\tiny 1.6}  & 10.8$\pm${\tiny 1.4} & 12.6$\pm${\tiny 1.5} & 11.7$\pm${\tiny 2.1}   \\
&{\fontfamily{lmtt}\selectfont DeerAR}     & 6.4$\pm${\tiny 1.0}    & 7.5$\pm${\tiny 0.7} & 6.3$\pm${\tiny 0.9}  & 7.9$\pm${\tiny 1.0} & 11.8$\pm${\tiny 1.7}  & 10.6$\pm${\tiny 1.5} & 11.8$\pm${\tiny 1.6} & 11.2$\pm${\tiny 1.7}  \\ 
&{\fontfamily{lmtt}\selectfont Informer}     & 5.5$\pm${\tiny 0.6}    & 6.8$\pm${\tiny 0.5} & 6.3$\pm${\tiny 1.0}  & 8.2$\pm${\tiny 1.1} & 10.7$\pm${\tiny 1.3}  & 10.1$\pm${\tiny 1.1} & 11.5$\pm${\tiny 1.5} & 10.9$\pm${\tiny 1.5}  \\ 

&{\fontfamily{lmtt}\selectfont LogSparse}     & 5.7$\pm${\tiny 0.8}    & 7.0$\pm${\tiny 0.5} & 6.6$\pm${\tiny 1.1}  & 8.4$\pm${\tiny 1.3} & 11.4$\pm${\tiny 1.5}  & 10.5$\pm${\tiny 1.3} & 12.2$\pm${\tiny 1.7} & 11.4$\pm${\tiny 1.8}  \\ 
\midrule
&{\fontfamily{lmtt}\selectfont GenF-2 (Ours)}    & {\bf5.1$\pm${\tiny 0.6}}    & {\bf 6.4$\pm${\tiny 0.5}}  & {\bf 5.8$\pm${\tiny 0.7}}  & {\bf 7.5$\pm${\tiny 0.8}} & 10.2$\pm${\tiny 1.1}  & 9.6$\pm${\tiny 1.3} & 11.7$\pm${\tiny 1.6}  & 10.4$\pm${\tiny 1.9} \\ 

&{\fontfamily{lmtt}\selectfont GenF-5 (Ours)}    & 5.3$\pm${\tiny 0.7}   & 6.6$\pm${\tiny 0.6}  &5.9$\pm${\tiny 0.8}    &  7.7$\pm${\tiny 0.9} & {\bf 9.4$\pm${\tiny 1.0}} & {\bf 9.2$\pm${\tiny 1.1}} & {\bf 10.6$\pm${\tiny 1.3}} & {\bf 9.9$\pm${\tiny 1.4}} \\  

\bottomrule
\end{tabular}
\caption{Performance comparison (MAE, sMAPE (\%) $\pm$ standard deviation over 5 runs) between three variants of GenF and two classical models, five SOTA methods in {\bf predicting greenhouse gas concentrations using the Greenhouse Gas dataset}. Note that all MAE values are multiplied by 100. The bold indicates the best performance.}
\label{performance_4}
\end{table*}

{\bf (3) Strategy Comparison: GenF VS DF/IF.} In Table \ref{performance_mse}, we vary the length of synthetic window for GenF and compare its performance to DF and IF, where DF can be understood as GenF with synthetic window length of zero and IF can be considered as GenF with synthetic window length of $N$ - 1 (i.e., $N$ is the prediction horizon). We use the same canonical transformer to implement both GenF and DF/IF, so as to evaluate the forecasting strategy itself. The forecasting performance is measured using MSE to capture both variance and bias of the predictor. In Table \ref{performance_mse}, we observe that, for the three tasks conducted (i.e., $t + 8$, $t + 12$, $t + 30$), GenF tends to obtain the lowest MSE. As an example, the MSE of GenF-4 at t + 8 is 134, which is 8.8\% and 12.4\% lower than DF and IF, respectively. This verifies our theoretical results in Corollary \ref{corr:1} using experiments, namely that GenF is able to better balance the forecasting variance and bias, leading to a much smaller MSE.  

{\bf (4) Effect of $L$ on Performance.} Adjusting the value of $L$ is a trade-off between forecasting variance and bias. A large value of $L$ brings GenF close to iterative forecasting, while a small value of $L$ brings GenF close to direct forecasting. From Table \ref{performance_mse}, we observe that a smaller $L$ is good for short-range forecasting. As the prediction horizon grows, a larger $L$ tends to be more helpful. For example, in Table \ref{performance_mse}, we find that GenF-4 performs best for $t + 8$ while GenF-6 outperforms other variants for $t + 12$.

\begin{table*}[!h]
\small
\centering
\setlength{\tabcolsep}{9pt}
{\renewcommand{\arraystretch}{0.85}
\begin{tabular}{ll|cc|cc|cc|cc}
\toprule
&{\fontfamily{lmtt}\selectfont Prediction Horizon} & \multicolumn{2}{c|}{{\fontfamily{lmtt}\selectfont t + 8}} & \multicolumn{2}{c|}{{\fontfamily{lmtt}\selectfont t + 12}} & \multicolumn{2}{c|}{{\fontfamily{lmtt}\selectfont t + 15}} & \multicolumn{2}{c}{{\fontfamily{lmtt}\selectfont t + 18}}  \\ \midrule 
&{\fontfamily{lmtt}\selectfont Metrics} & {\fontfamily{lmtt}\selectfont MAE} & {\fontfamily{lmtt}\selectfont sMAPE} &{\fontfamily{lmtt}\selectfont MAE} & {\fontfamily{lmtt}\selectfont sMAPE} &{\fontfamily{lmtt}\selectfont MAE} & {\fontfamily{lmtt}\selectfont sMAPE} &{\fontfamily{lmtt}\selectfont MAE} & {\fontfamily{lmtt}\selectfont sMAPE} \\ 
\cmidrule(lr){1-2} \cmidrule(lr){3-4}\cmidrule(lr){5-6}\cmidrule(lr){7-8}\cmidrule(lr){9-10}

 &{\fontfamily{lmtt}\selectfont ARIMA}      & 5.3$\pm${\tiny 0.7} & 5.9$\pm${\tiny 0.6} & 7.6$\pm${\tiny 0.5} & 8.5$\pm${\tiny 0.7} & 9.2$\pm${\tiny 0.6} & 11.4$\pm${\tiny 1.1} & 10.9$\pm${\tiny 0.8} & 13.5$\pm${\tiny 2.1} \\  
 
 &{\fontfamily{lmtt}\selectfont LSTM} & 4.7$\pm${\tiny 0.5} & 5.4$\pm${\tiny 0.3} & 6.8$\pm${\tiny 0.4} & 7.7$\pm${\tiny 0.8}  & 7.9$\pm${\tiny 0.5} & 10.1$\pm${\tiny 0.9}   & 9.4$\pm${\tiny 0.6} & 12.3$\pm${\tiny 1.9}    \\  \midrule

 \parbox[t]{2mm}{\multirow{5}{*}{\rotatebox[origin=c]{90}{\vspace*{-5mm}{\bf SOTA}}}} & {\fontfamily{lmtt}\selectfont TLSTM}            & 4.2$\pm$ {\tiny 0.2}          & 5.0$\pm$ {\tiny 0.3} & 5.7$\pm$ {\tiny 0.1}          & 7.3$\pm$ {\tiny 0.5} & 6.9$\pm$ {\tiny 0.3}          & 8.9$\pm$ {\tiny 1.1} & 8.2$\pm$ {\tiny 0.3}       & 10.8$\pm$ {\tiny 1.7}  \\ 
&{\fontfamily{lmtt}\selectfont LSTNet}           & 4.0$\pm$ {\tiny 0.1}          & 4.7$\pm$ {\tiny 0.4}  & 5.2$\pm$ {\tiny 0.2}          & 6.2$\pm$ {\tiny 0.8} & 6.8$\pm$ {\tiny 0.3}         & 8.7$\pm$ {\tiny 0.8} & 7.9$\pm$ {\tiny 0.3}       &    10.5$\pm$ {\tiny 1.4} \\ 
&{\fontfamily{lmtt}\selectfont DeepAR}       & 4.3$\pm$ {\tiny 0.2}         &  5.1$\pm$ {\tiny 0.4} & 5.4$\pm$ {\tiny 0.3}          & 6.3$\pm${\tiny 0.5} & 7.1$\pm$ {\tiny 0.4}          & 9.0$\pm$ {\tiny 1.0} & 8.5$\pm$ {\tiny 0.4}       & 10.9$\pm${\tiny 1.8}   
\\ 
&{\fontfamily{lmtt}\selectfont Informer}       & 3.9$\pm$ {\tiny 0.3}         &  4.6$\pm$ {\tiny 0.2} & 4.8$\pm$ {\tiny 0.2}          & 5.9$\pm${\tiny 0.3} & 6.4$\pm$ {\tiny 0.5}          & 8.1$\pm$ {\tiny 0.8} & 7.3$\pm$ {\tiny 0.4}       & 9.4$\pm${\tiny 1.2}   
\\ 
&{\fontfamily{lmtt}\selectfont LogSparse}       & 4.1$\pm$ {\tiny 0.3}         &  4.7$\pm$ {\tiny 0.5} & 4.9$\pm$ {\tiny 0.3}          & 6.1$\pm${\tiny 0.4} & 6.6$\pm$ {\tiny 0.2}          & 8.2$\pm$ {\tiny 0.9} & 7.6$\pm$ {\tiny 0.2}       & 9.8$\pm${\tiny 1.5}   
\\ \midrule

&{\fontfamily{lmtt}\selectfont GenF-2 (Ours)}           & {\bf 3.6$\pm$ {\tiny 0.2}}   & {\bf 4.4$\pm$ {\tiny 0.6}} & 4.6$\pm$ {\tiny 0.2}   & 5.5$\pm$ {\tiny 0.7}  & 6.1$\pm$ {\tiny 0.2}          & 7.8$\pm$ {\tiny 1.2} & 7.1$\pm$ {\tiny 0.5}        &  8.9$\pm$ {\tiny 1.1} \\ 
&{\fontfamily{lmtt}\selectfont GenF-3 (Ours)}          & 3.6$\pm$ {\tiny 0.2}           & 4.5$\pm${\tiny 0.4} & {\bf 4.5$\pm$ {\tiny 0.3}}    & {\bf 5.4$\pm$ {\tiny 0.6}}  & 5.9$\pm$ {\tiny 0.3}  &  7.6$\pm$ {\tiny 1.0} & 6.8$\pm$ {\tiny 0.4} &  8.5$\pm$ {\tiny 0.8} \\ 
&{\fontfamily{lmtt}\selectfont GenF-5 (Ours)}           & 3.7$\pm$ {\tiny 0.2}           & 4.5$\pm$ {\tiny 0.5} & 4.6$\pm$ {\tiny 0.4}          & 5.5$\pm$ {\tiny 0.4}  & {\bf 5.8$\pm$ {\tiny 0.3}}  &  {\bf 7.5$\pm$ {\tiny 0.9}} & {\bf 6.5$\pm$ {\tiny 0.3}} & {\bf 8.4$\pm$ {\tiny 0.7}} \\   
\bottomrule
\end{tabular}}
\caption{Performance (MAE, sMAPE $\pm$ standard deviation over 5 runs) of {\bf predicting fossil fuel consumption using the World Energy Consumption dataset}. Due to the time span of the dataset (from 1971 to 2014), we only show predictive performance up to t + 18. The bold indicates the best performance.}
\label{performance_1}
\end{table*}

\begin{table*}[!ht]
\small
\centering
\setlength{\tabcolsep}{10pt}
\begin{tabular}{ll|cc|cc|cc|cc}
\toprule
&{\fontfamily{lmtt}\selectfont Prediction Horizon} & \multicolumn{2}{c|}{{\fontfamily{lmtt}\selectfont t + 8}} & \multicolumn{2}{c|}{{\fontfamily{lmtt}\selectfont t + 12}} & \multicolumn{2}{c|}{{\fontfamily{lmtt}\selectfont t + 30}} & \multicolumn{2}{c}{{\fontfamily{lmtt}\selectfont t + 60}}  \\ \midrule 
 &{\fontfamily{lmtt}\selectfont Metrics} & {\fontfamily{lmtt}\selectfont MAE} & {\fontfamily{lmtt}\selectfont sMAPE} &{\fontfamily{lmtt}\selectfont MAE} & {\fontfamily{lmtt}\selectfont sMAPE} &{\fontfamily{lmtt}\selectfont MAE} & {\fontfamily{lmtt}\selectfont sMAPE} &{\fontfamily{lmtt}\selectfont MAE} & {\fontfamily{lmtt}\selectfont sMAPE} \\ 
\cmidrule(lr){1-2} \cmidrule(lr){3-4}\cmidrule(lr){5-6}\cmidrule(lr){7-8}\cmidrule(lr){9-10}

 &{\fontfamily{lmtt}\selectfont ARIMA}      & 5.0$\pm${\tiny 0.9} & 6.9$\pm${\tiny 1.0} & 5.8$\pm${\tiny 1.3} & 7.7$\pm${\tiny 1.4} & 9.5$\pm${\tiny 1.6} & 11.5$\pm${\tiny 1.7} & 18.9$\pm${\tiny 2.8} & 25.4$\pm${\tiny 2.9} \\  
 
 &{\fontfamily{lmtt}\selectfont LSTM} & 4.6$\pm${\tiny 0.7} & 6.3$\pm${\tiny 0.9} & 5.2$\pm${\tiny 1.1} & 7.4$\pm${\tiny 1.3}  & 9.0$\pm${\tiny 1.4} & 11.2$\pm${\tiny 1.6}   & 17.6$\pm${\tiny 2.1} & 23.6$\pm${\tiny 2.6}    \\  \midrule

\parbox[t]{2mm}{\multirow{5}{*}{\rotatebox[origin=c]{90}{\vspace*{-5mm}{\bf SOTA}}}}& {\fontfamily{lmtt}\selectfont TLSTM}        & 4.4$\pm${\tiny 0.6}   & 6.1$\pm${\tiny 0.7} & 4.9$\pm${\tiny 0.9}  & 7.1$\pm${\tiny 1.0} & 8.8$\pm${\tiny 1.3}    & 11.0$\pm${\tiny 1.5} & 16.9$\pm${\tiny 2.3} & 23.1$\pm${\tiny 2.3} \\
&{\fontfamily{lmtt}\selectfont LSTNet}    & 4.8$\pm${\tiny 0.9}    & 6.4$\pm${\tiny 1.0} &5.1$\pm${\tiny 0.8}  & 7.2$\pm${\tiny 1.0} & 8.7$\pm${\tiny 1.2}   & 10.9$\pm${\tiny 1.4} & 16.4$\pm${\tiny 2.1} & 22.7$\pm${\tiny 2.0}   \\ 
&{\fontfamily{lmtt}\selectfont DeerAR}     & 4.7$\pm${\tiny 0.8}    & 6.2$\pm${\tiny 0.9} & 4.8$\pm${\tiny 0.8}  & 7.0$\pm${\tiny 0.9} & 8.5$\pm${\tiny 1.0}  & 10.6$\pm${\tiny 1.2} & 16.1$\pm${\tiny 1.9} & 22.1$\pm${\tiny 2.0}  \\ 
&{\fontfamily{lmtt}\selectfont Informer}     & 3.8$\pm${\tiny 0.6}    & 5.9$\pm${\tiny 0.8} & 4.9$\pm${\tiny 1.0}  & 7.1$\pm${\tiny 0.9} & 8.6$\pm${\tiny 1.4}  & 10.8$\pm${\tiny 1.6} & 15.3$\pm${\tiny 2.4} & 19.8$\pm${\tiny 2.7}  \\

&{\fontfamily{lmtt}\selectfont LogSparse}     & 4.5$\pm${\tiny 0.5}    & 6.2$\pm${\tiny 0.6} & 5.2$\pm${\tiny 1.1}  & 7.4$\pm${\tiny 1.2} & 8.8$\pm${\tiny 1.0}  & 10.9$\pm${\tiny 1.1} & 15.7$\pm${\tiny 2.0} & 20.9$\pm${\tiny 1.8}  \\ 
\midrule

&{\fontfamily{lmtt}\selectfont GenF-2 (Ours)}    & {\bf 3.6$\pm${\tiny 0.5}}  & {\bf 5.7$\pm${\tiny 0.6}}  & {\bf 4.4$\pm${\tiny 0.8}}       & {\bf 6.6$\pm${\tiny 1.2}}                  & 8.4$\pm${\tiny 1.2} & 11.0$\pm${\tiny 2.3} & 15.2$\pm${\tiny 2.3} & 19.4$\pm${\tiny 3.0}  \\ 
&{\fontfamily{lmtt}\selectfont GenF-6 (Ours)}    & 3.9$\pm${\tiny 0.8}   & 5.9$\pm${\tiny 0.9}  & 4.6$\pm${\tiny 0.9}    &6.9$\pm${\tiny 1.1} & {\bf 8.1$\pm${\tiny 1.4}} & {\bf 10.6$\pm${\tiny 1.5}} & {\bf 14.1$\pm${\tiny 2.0}} & {\bf 18.5$\pm${\tiny 2.5}} \\  
\bottomrule
\end{tabular}
\caption{Performance (MAE, sMAPE (\%) $\pm$ standard deviation over 5 runs) of {\bf predicting electricity consumption using the House Electricity Consumption dataset}. Note that all MAE values are divided by $10^{11}$ as the readings are cumulative and large. The bold indicates the best performance.}
\vspace{5mm}
\label{performance_3}
\end{table*}

\begin{table*}[!bt]
\small
\centering
\setlength{\tabcolsep}{16pt}
{\renewcommand{\arraystretch}{1.1}
\begin{tabular}{l|cc|ccccc}
\toprule
& \multicolumn{1}{c}{{\fontfamily{lmtt}\selectfont DF}} & \multicolumn{1}{c|}{{\fontfamily{lmtt}\selectfont IF}} & \multicolumn{1}{c}{{\fontfamily{lmtt}\selectfont GenF-2}} & \multicolumn{1}{c}{{\fontfamily{lmtt}\selectfont GenF-4}}  & \multicolumn{1}{c}{{\fontfamily{lmtt}\selectfont GenF-6}} & \multicolumn{1}{c}{{\fontfamily{lmtt}\selectfont GenF-8}} & \multicolumn{1}{c}{{\fontfamily{lmtt}\selectfont GenF-10}} \\ \midrule

\multicolumn{1}{l|}{{\fontfamily{lmtt}\selectfont t + 8}}  & 147 $\pm$ {\tiny 7}  & 153 $\pm$ {\tiny 10}  & 138 $\pm$ {\tiny 9} & 134 $\pm$ {\tiny 7} & 141 $\pm$ {\tiny 9} & --  & --    \\  

\multicolumn{1}{l|}{{\fontfamily{lmtt}\selectfont t + 12}}  & 168 $\pm$ {\tiny 11}  & 177 $\pm$ {\tiny 15} & 159 $\pm$ {\tiny 13} & 156 $\pm$ {\tiny 11} & 152 $\pm$ {\tiny 13} & 157 $\pm$ {\tiny 11} & 166 $\pm$ {\tiny 10}  \\ 
 
\multicolumn{1}{l|}{{\fontfamily{lmtt}\selectfont t + 30}}  & 206 $\pm$ {\tiny 19}  &  221 $\pm$ {\tiny 24} & 189 $\pm$ {\tiny 17} & 185 $\pm$ {\tiny 16} & 179 $\pm$ {\tiny 19} & 188 $\pm$ {\tiny 15} & 186 $\pm$ {\tiny 16} \\ 

\bottomrule
\end{tabular}}
\caption{Performance (MSE $\pm$ standard deviation over 5 runs) comparison between GenF with different lengths of synthetic window and DF/IF for predicting blood pressure on Vital Sign dataset.}
\label{performance_mse}
\end{table*}

{\renewcommand{\arraystretch}{1.0}
\begin{table*}[!t]
\small
%\vspace{7mm}
\centering
\setlength{\tabcolsep}{9pt}
\begin{tabular}{ll|cc|cc|cc|cc}
\toprule
&{\fontfamily{lmtt}\selectfont Prediction Horizon} & \multicolumn{2}{c|}{{\fontfamily{lmtt}\selectfont t + 80}} & \multicolumn{2}{c|}{{\fontfamily{lmtt}\selectfont t + 160}} & \multicolumn{2}{c|}{{\fontfamily{lmtt}\selectfont t + 320}} & \multicolumn{2}{c}{{\fontfamily{lmtt}\selectfont t + 480}}  \\ \midrule 
 &{\fontfamily{lmtt}\selectfont Metrics} & {\fontfamily{lmtt}\selectfont MAE} & {\fontfamily{lmtt}\selectfont sMAPE} &{\fontfamily{lmtt}\selectfont MAE} & {\fontfamily{lmtt}\selectfont sMAPE} &{\fontfamily{lmtt}\selectfont MAE} & {\fontfamily{lmtt}\selectfont sMAPE} &{\fontfamily{lmtt}\selectfont MAE} & {\fontfamily{lmtt}\selectfont sMAPE} \\ 
\cmidrule(lr){1-2} \cmidrule(lr){3-4}\cmidrule(lr){5-6}\cmidrule(lr){7-8}\cmidrule(lr){9-10}

%\parbox[t]{2mm}{\multirow{7}{*}{\rotatebox[origin=c]{90}{\vspace*{-5mm}{\bf SOTA}}}}& {\fontfamily{lmtt}\selectfont TLSTM}        & 5.8$\pm${\tiny 0.8}   & 7.0$\pm${\tiny 0.6} & 6.5$\pm${\tiny 0.9}  & 7.9$\pm${\tiny 1.1} & 11.0$\pm${\tiny 1.2}    & 10.4$\pm${\tiny 1.2} & 12.4$\pm${\tiny 1.8} & 11.8$\pm${\tiny 2.0} \\ 

%&{\fontfamily{lmtt}\selectfont LSTNet}    & 6.0$\pm${\tiny 0.9}    & 6.8$\pm${\tiny 0.5} &6.6$\pm${\tiny 1.0}  & 8.1$\pm${\tiny 1.2} & 11.5$\pm${\tiny 1.6}  & 10.8$\pm${\tiny 1.4} & 12.6$\pm${\tiny 1.5} & 11.7$\pm${\tiny 2.1}   \\

%&{\fontfamily{lmtt}\selectfont DeerAR}     & 6.4$\pm${\tiny 1.0}    & 7.5$\pm${\tiny 0.7} & 6.3$\pm${\tiny 0.9}  & 7.9$\pm${\tiny 1.0} & 11.8$\pm${\tiny 1.7}  & 10.6$\pm${\tiny 1.5} & 11.8$\pm${\tiny 1.6} & 11.2$\pm${\tiny 1.7}  \\ 

&{\fontfamily{lmtt}\selectfont Informer}     & 16.9$\pm${\tiny 2.5}    & 13.5$\pm${\tiny 2.0} & 30.5$\pm${\tiny 9.1}  & 22.1$\pm${\tiny 2.9} & 36.8$\pm${\tiny 10.5}  & 26.1$\pm${\tiny 3.7} & 39.5$\pm${\tiny 14.2} & 30.9$\pm${\tiny 4.5}  \\ 

&{\fontfamily{lmtt}\selectfont LogSparse}     & 17.4$\pm${\tiny 1.9}    & 14.7$\pm${\tiny 1.8} & 32.7$\pm${\tiny 7.7}  & 24.3$\pm${\tiny 3.3} & 35.2$\pm${\tiny 8.9}  & 28.5$\pm${\tiny 3.3} & 38.4$\pm${\tiny 9.8} & 32.1$\pm${\tiny 3.8}  \\ 
\midrule
&{\fontfamily{lmtt}\selectfont GenF-30 (Ours)}    & {\bf15.8$\pm${\tiny 2.1}}    & {\bf 12.4$\pm${\tiny 1.7}}  & {\bf 29.3$\pm${\tiny 8.2}}  & {\bf 21.5$\pm${\tiny 2.8}} & {\bf 33.1$\pm${\tiny 9.9}}  &  {\bf 24.6$\pm${\tiny 2.9}} & {\bf 35.8$\pm${\tiny 12.1}}  & {\bf 26.4$\pm${\tiny 4.2}} \\ 

%&{\fontfamily{lmtt}\selectfont GenF-5 (Ours)}    & 5.3$\pm${\tiny 0.7}   & 6.6$\pm${\tiny 0.6}  &5.9$\pm${\tiny 0.8}    &  7.7$\pm${\tiny 0.9} & {\bf 9.4$\pm${\tiny 1.0}} & {\bf 9.2$\pm${\tiny 1.1}} & {\bf 10.6$\pm${\tiny 1.3}} & {\bf 9.9$\pm${\tiny 1.4}} \\  

\bottomrule
\end{tabular}
\caption{Performance comparison between GenF and two SOTA methods in predicting blood pressure using the MIMIC-III dataset for forecasting horizon up to t + 480.}
\label{new_tab}
\end{table*}}

\begin{figure*}[!t]
\centering
\resizebox{15.5cm}{!}{
\begin{tikzpicture}
\def\ymaxnum{100}
\def\yminnum{40}
\def\auxlineht{98}
\def\arrowheight{95}
\begin{axis}[
    axis line style={line width=0.5pt},
    width=13.8cm,
    height=5.3cm,
    axis lines=left,
    xmin=0,
    xmax=30.5,
    ymin=60,
    ymax=90,
    xtick={0,7,28},
    xticklabels={0,20,28},
    ytick={60,70,80,90},
    yticklabels = {60,70,80,90},
    legend style={nodes={scale=0.9, transform shape}, legend columns=1, at={(1.26,0.1)},anchor=south east,font=\small, legend style={/tikz/every even column/.append style={column sep=0.001cm, row sep = 0.03cm}}},
    legend image post style={line width=0.5mm},
    legend cell align=left,
    xlabel near ticks,
    x label style={at={(1,0)},},
    xlabel={  \hspace{2mm} time step ($t$)},
    ylabel near ticks,
    ylabel={Heart Rate},
]
    \addplot [forget plot]coordinates{(7,0)(7,\auxlineht)};
    \addplot [forget plot]coordinates{(14.87,0)(14.87,\auxlineht)};
    \addplot [forget plot]coordinates{(28,0)(28,\auxlineht)};
     
    \addplot [forget plot,<->]coordinates{(0,89)(7,89)}
       node[coordinate,name=a,pos=0.5]{};
    \addplot [forget plot,<->]coordinates{(14.87,89)(7,89)}
       node[coordinate,name=b,pos=0.5]{};
    \addplot [forget plot,<->]coordinates{(28,89)(14.87,89)}
       node[coordinate,name=c,pos=0.5]{};

    \addplot [cyan,thick, line width = 1pt] table[x=modify, y=True_Value] {table.tex};
    \addplot [red,thick, line width = 1pt,dashed] table[x=modify,y=cWGAN_GEP] {table.tex};  
    \addplot [black,thick, line width = 1pt,dashed] table[x=modify, y=cWGAN_GEP_w/o_MI] {table.tex}; 
    
    \addplot [green,thick, line width = 1pt,dashed] table[x=modify, y=cWGAN_GP] {table.tex}; 
    \addplot [orange,thick, line width = 1pt,dashed] table[x=modify, y=GAN] {table.tex}; 
    
    \addplot [blue,thick, line width = 1pt,dashed] table[x=modify,y=LSTM] {table.tex};
    \addplot [teal,thick, line width = 1pt,dashed] table[x=modify,y=ARIMA] {table.tex};    
    %\addplot [violet,thick, line width = 1pt,dashed] table[x=modify, y=SVR] {table.tex};
    %\addplot [brown,thick, line width = 1pt,dashed] table[x=modify,y=GPR] {table.tex};

   % , SVR (35.6/22.9),GPR (16.9/18.1)
    \legend{{True Value (0/0), {\bf CWGAN-TS (7.1/5.3)}, CWGAN-RS (18.7/8.8), CWGAN-GP (32.8/16.1),  GAN (36.6/20.6), LSTM (20.0/10.8), ARIMA (75.1/27.2)}}
    
    \addplot [forget plot]coordinates{(0,60)}
        node[coordinate,name=A,pos=0.5]{};
    \addplot [forget plot]coordinates{(0,89)}
        node[coordinate,name=B,pos=0.5]{};
    \addplot [forget plot]coordinates{(7,89)}
        node[coordinate,name=C,pos=0.5]{};

    \addplot [forget plot]coordinates{(7,60)}
        node[coordinate,name=D,pos=0.5]{};
    \addplot [forget plot]coordinates{(28,60)}
        node[coordinate,name=E,pos=0.5]{};
    \addplot [forget plot]coordinates{(28,89)}
        node[coordinate,name=F,pos=0.5]{};

    \addplot [forget plot]coordinates{(14.87,89)}
        node[coordinate,name=G,pos=0.5]{};
    \addplot [forget plot]coordinates{(14.87,60)}
        node[coordinate,name=H,pos=0.5]{};
    \addplot [forget plot]coordinates{(28,89)}
        node[coordinate,name=J,pos=0.5]{};
        
\end{axis}
\node at(a)[align=center,above]{\small observation window};
\node at(b)[align=center,above,yshift=-0.8mm]{\small synthetic window};
\node at(c)[align=center,above,yshift=-0.8mm]{\small prediction horizon};

\node at(b)[xshift=-5.5mm,  yshift=-39.1mm]{21};
\draw[line width=0.07mm,  gray](3.74, -0.008) -- (3.74, -0.075);
  
\node at(b)[xshift=4.8mm,  yshift=-39.1mm]{22};
\draw[line width=0.07mm,  gray](4.77, -0.008) -- (4.77, -0.075);

\node at(b)[xshift=15.1mm,  yshift=-39.1mm]{23};
\draw[line width=0.07mm,  gray](5.80, -0.008) -- (5.80, -0.075);

\node at(b)[xshift=25.4mm,  yshift=-39.1mm]{24};
\draw[line width=0.07mm,  gray](6.83, -0.008) -- (6.83, -0.075);

\node at(b)[xshift=35.7mm,  yshift=-39.1mm]{25};
\draw[line width=0.07mm,  gray](7.86, -0.008) -- (7.86, -0.075);

\node at(b)[xshift=46mm,  yshift=-39.1mm]{26};
\draw[line width=0.07mm,  gray](8.89, -0.008) -- (8.89, -0.075);

\node at(b)[xshift=56.3mm,  yshift=-39.1mm]{27};
\draw[line width=0.07mm,  gray](9.92, -0.008) -- (9.92, -0.075);

\begin{scope}[on background layer]
    \fill[top color=gray!90!green!20!white,bottom color=white] (A) rectangle (C);
    \fill[top color=green!80!red!20!white,bottom color=white] (D) rectangle (G);
    \fill[top color=green!70!blue!30!white,bottom color=white] (H) rectangle (J);
\end{scope}
\end{tikzpicture}}
\caption{Synthetic data generated by various models and their corresponding forecasting performance using Vital Sign dataset. The values in parentheses are the MSE of synthetic data generation and forecasting performance, averaged over the synthetic window and prediction horizon, respectively. }%See Fig. \ref{performance_890} in the {\bf Appendix} for results on other subjects.}
\label{performance_all}
\medskip
\hrulefill
\vspace{-6mm}
\end{figure*}
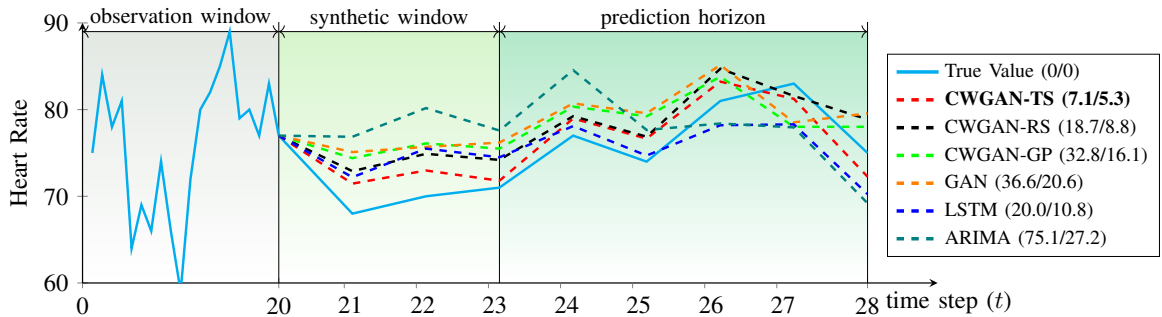

{\bf (5) Long Forecasting Horizons up to $t+480$.} We have discussed our vital sign forecasting results with medical experts and the feedback is that the forecasting results up to $t + 12$ are useful for ICU and high-risk patients. For longer forecasting horizons, the results tend to become unreliable and provide less useful information. That is why we only show the performance up to $t + 60$ in performance evaluation. However, to demonstrate the advantage of GenF over SOTA methods on extremely long forecasting horizons, we present another set of new results in Table \ref{new_tab}.

We shortlist two SOTA methods (i.e., Informer and LogSparse) and compare the performance of GenF to them for longer forecasting horizons (i.e., up to t + 480). The performance of forecasting blood pressure using the MIMIC-III dataset is summarized in Table \ref{new_tab}. We observe that, for long forecasting horizons, GenF still can outperform SOTA methods such as LogSparse and Informer, with an improvement of  6\% - 10\% at t + 320.

\subsection{Ablation Study}
\label{SD}
We now conduct an ablation study to demonstrate the effectiveness of key components in GenF. Specifically, we remove one component at a time in GenF and observe the impact on synthetic time series data generation and forecasting performance. We construct three variants of CWGAN-TS as follows: {\bf (i) CWGAN-GP:} CWGAN-TS without the supervised error penalty term in the loss function \eqref{loss_2}. {\bf (ii) CWGAN-RS:} CWGAN-TS without the ITC algorithm, instead, the CWGAN-TS is trained with a randomly selected training subset. {\bf (iii) GAN:} A conventional GAN \cite{goodfellow2014generative} without considering the Wasserstein distance. More importantly, {\bf the advantage of GAN loss} in generating synthetic time series data is demonstrated by comparing to LSTM.

{\bf (1) Experiment Setup. } In the experiment of predicting heart rate on the Vital Sign dataset, we shortlist a unit called Subject ID 23 and show the observation window ($M$ = 20), synthetic window ($L$ = 3) and the prediction horizon ($N$ = 1, ..., 5) in Fig. \ref{performance_all}.
%evaluate the synthetic data generated by CWGAN-TS and its three variants, as well as the four classical models. Furthermore, we also study the impact of synthetic data on forecasting performance. 
Specifically, in synthetic window, we show the synthetic data generated by CWGAN-TS, its three variants and two classical models (ARIMA and LSTM) from $t = 20$ to $t = 23$. In the prediction horizon, we use the same predictor to evaluate the forecasting performance of all models by taking both of their past observations and the synthetic data (from $t = 21$ to $23$) as the input. As a comparison, we also show the true value (solid line).
In Fig. \ref{performance_all}, the values in parentheses are the MSE of the synthetic data generation and the forecasting performance, averaged over the synthetic window and prediction horizon, respectively. 

{\bf (2) Advantage of CWGAN-TS in Synthetic Data Generation. } 
Fig. \ref{performance_all} shows the synthetic data generation performance of various methods (left number in parentheses). Comparing CWGAN-TS to LSTM, we find that both of them can capture the rising trend of heart rate, but the synthetic data generated by CWGAN-TS is more stable and accurate. Specifically, the performance of LSTM and CWGAN-TS at $t = 21$ (the first synthetic data) are comparable. Subsequently, the synthetic data generated by LSTM tends to fluctuate greatly, resulting in a larger MSE. We posit this is caused by error propagation. As for CWGAN-TS, we see that CWGAN-TS is essentially a combination of LSTMs (see Section \ref{ES}), with the key difference being that CWGAN-TS incorporates an additional GAN loss, but results in more stable synthetic data with a 65\% lower MSE than LSTM. This demonstrates {\bf the effectiveness of the GAN loss in mitigating error propagation}. Interestingly, the LSTM has 3.5K parameters while CWGAN-TS has only 3K parameters. 

When comparing CWGAN-TS to CWGAN-GP, we find that the squared error penalty term in \eqref{loss_2} significantly improves generation performance (i.e., 78\%), suggesting the crucial role of the error penalty term. 
Moreover, when comparing CWGAN-TS to CWGAN-RS, the experimental results show that the ITC algorithm improves generation performance by 62\%, indicating its effectiveness in selecting representative units. 
Comparing to the original GAN, CWGAN-TS improves generation performance by 81\%. We posit this is because CWGAN-TS uses Wasserstein distance as part of the loss function, leading to a more stable learning process.

{\bf (3) Forecasting Performance. } Fig. \ref{performance_all} also shows the forecasting performance (the right number in parentheses). As expected, the model that generates more accurate synthetic data tends to have better forecasting performance, suggesting the important role of the CWGAN-TS and the ITC algorithm in improving long-range forecasting. As an example, CWGAN-TS achieves a much smaller error in synthetic data generation (compared to CWGAN-RS), leading to a 40\% improvement in forecasting performance. 

{\bf (4) Averaged Performance over All Subjects. } We also study the performance averaged over all subjects. In addition to the classical models, the five selected SOTA baselines and a SOTA GAN based model called TimeGAN \cite{yoon2019time} are also examined. We summarize the average performance in Table \ref{ablation_study_new} in the Appendix, where we observe that CWGAN-TS outperforms benchmarks by up to 10\% in synthetic data generation and long-range forecasting.

\begin{table*}[t]
\small
\begin{center}
\setlength{\tabcolsep}{16pt}
{\renewcommand{\arraystretch}{1.0}
\begin{tabular}{l|ccc|cccccc}
\toprule
{\fontfamily{lmtt}\selectfont Performance (MSE) } & \multicolumn{3}{c|}{{\fontfamily{lmtt}\selectfont Generation}} & \multicolumn{4}{c}{{\fontfamily{lmtt}\selectfont Forecasting}} \\ \midrule
& {\fontfamily{lmtt}\selectfont t + 1}  & {\fontfamily{lmtt}\selectfont t + 2}  & {\fontfamily{lmtt}\selectfont t + 3}  & {\fontfamily{lmtt}\selectfont t + 4}  & {\fontfamily{lmtt}\selectfont t + 8}  & {\fontfamily{lmtt}\selectfont t + 12}  & {\fontfamily{lmtt}\selectfont t + 24}  \\
\midrule
{\fontfamily{lmtt}\selectfont ARIMA }  & 106.3 & 125.7 & 145.3 & 160.2 & 193.2 & 201.4 & 225.7 \\
{\fontfamily{lmtt}\selectfont LSTM }                            &  95.2 & 110.6 & 127.8 & 142.8 & 169.6 & 187.3 & 199.4\\ \midrule
{\fontfamily{lmtt}\selectfont LSTNet }                         &  91.3 & 107.4 & 116.8 & 136.8 & 164.3 & 179.4 & 189.3 \\
{\fontfamily{lmtt}\selectfont TLSTM }                              &  90.2 & 106.5 & 113.2 & 129.3 & 159.4 & 169.8 & 185.7\\
{\fontfamily{lmtt}\selectfont DeepAR }                              &  90.9 & 109.4 & 118.4 & 134.3 & 165.8 & 176.8 & 193.6\\
{\fontfamily{lmtt}\selectfont Informer }                              &  89.7 & 105.2 & 109.4 & 133.1 & 155.3 & 170.7 & 185.7\\
{\fontfamily{lmtt}\selectfont LogSparse }   &  {\bf 89.3} & 103.7 & 114.8 & 135.3 & 157.2 & 171.2 & 181.7 \\ \midrule
{\fontfamily{lmtt}\selectfont TimeGAN }   &  94.2 & 114.3 & 121.8 & 139.7 & 167.9 & 180.4 & 195.3 \\ 
\midrule
{\fontfamily{lmtt}\selectfont CWGAN-TS (Ours) }                 &  90.7 & {\bf 97.5} & {\bf 101.9} & {\bf 115.2} & {\bf 145.3} & {\bf 161.2} & {\bf 169.4} \\ \midrule
{\fontfamily{lmtt}\selectfont Min Improvement }  & - & 5.97\% & 6.86\% & 10.9\% & 6.44\% & 5.06\% & 6.77\% \\
\bottomrule
\end{tabular}}
\end{center}
\caption{Ablation Study: Averaged results of all subjects in generating/forecasting heart rate using CWGAN-TS and others.}
\label{ablation_study_new}
\end{table*}

\begin{table*}[!b]
\small
\centering
\setlength{\tabcolsep}{12pt}
{\renewcommand{\arraystretch}{1.4}
\begin{tabular}{l|lc|cc|cc|ccc}
\toprule
\multicolumn{1}{c|}{{\fontfamily{lmtt}\selectfont Prediction Horizon}} & \multicolumn{2}{c|}{{\fontfamily{lmtt}\selectfont t + 8}} & \multicolumn{2}{c|}{{\fontfamily{lmtt}\selectfont t + 12}} & \multicolumn{2}{c|}{{\fontfamily{lmtt}\selectfont t + 30}} & \multicolumn{2}{c}{{\fontfamily{lmtt}\selectfont t + 60}} \\ \midrule
{\fontfamily{lmtt}\selectfont Metrics} &{\fontfamily{lmtt}\selectfont MAE} & {\fontfamily{lmtt}\selectfont sMAPE} &{\fontfamily{lmtt}\selectfont MAE} & {\fontfamily{lmtt}\selectfont sMAPE} &{\fontfamily{lmtt}\selectfont MAE} & {\fontfamily{lmtt}\selectfont sMAPE} &{\fontfamily{lmtt}\selectfont MAE} & {\fontfamily{lmtt}\selectfont sMAPE}\\
%\midrule
\cmidrule(lr){1-2} \cmidrule(lr){2-3}\cmidrule(lr){4-5}\cmidrule(lr){6-7}\cmidrule(lr){8-9}
 %{\fontfamily{lmtt}\selectfont Informer}       & 6.5$\pm${\tiny 0.4} &  5.3$\pm${\tiny 0.5} & 7.9$\pm${\tiny 0.7} & 6.6$\pm${\tiny 0.7} & 11.8$\pm${\tiny 1.5} & 10.1$\pm${\tiny 1.4} & 14.1$\pm${\tiny 2.7} & 11.5$\pm${\tiny 1.6} \\ 
 {\fontfamily{lmtt}\selectfont GenF-6-Informer}       & 6.4$\pm${\tiny 0.2} &  5.3$\pm${\tiny 0.4} & 7.7$\pm${\tiny 0.6} & 6.4$\pm${\tiny 0.6} & 11.4$\pm${\tiny 1.4} & 9.9$\pm${\tiny 1.4} & 13.8$\pm${\tiny 2.6} & 11.4$\pm${\tiny 0.9} \\ 
 %{\fontfamily{lmtt}\selectfont LogSparse}      & 6.6$\pm${\tiny 0.7}   & 5.5$\pm${\tiny 0.3} & 8.1$\pm${\tiny 0.5}   & 6.9$\pm${\tiny 0.6}  & 11.6$\pm${\tiny 1.0}  & 9.7$\pm${\tiny 0.9}  & 14.5$\pm${\tiny 2.8} & 11.3$\pm${\tiny 1.2}   \\ 
 {\fontfamily{lmtt}\selectfont GenF-6-LogSparse}      & 6.6$\pm${\tiny 0.5}   & 5.6$\pm${\tiny 0.4} & 8.0$\pm${\tiny 0.4}   & 6.7$\pm${\tiny 0.5}  & 11.7$\pm${\tiny 1.2}  & 9.7$\pm${\tiny 0.8}  & 14.7$\pm${\tiny 2.9} & 11.5$\pm${\tiny 1.4}   \\ \midrule      
 {\fontfamily{lmtt}\selectfont GenF-6$^{*}$ (Ours)}          & 6.3$\pm${\tiny 0.4}  &   5.2$\pm${\tiny 0.3} & {\bf 7.4$\pm${\tiny 0.7}}  & {\bf 6.2$\pm${\tiny 0.6}}   & {\bf 10.7$\pm${\tiny 1.1}} & {\bf 8.9$\pm${\tiny 0.9}}  & {\bf 12.6$\pm${\tiny 2.5}} &  {\bf 10.2$\pm${\tiny 1.4}}  \\ 
 
\bottomrule
\end{tabular}}
\caption{Performance (MAE, sMAPE$\pm$standard deviation) of predicting blood pressure using the MIMIC-III dataset. All methods use CWGAN-TS to generate synthetic data for next 6 time steps. GenF-6-Informer is to use Informer as the predictor, GenF-6-LogSparse is to use LogSparse as the predictor and GenF-6$^{*}$ is to use the canonical transformer as the predictor. }

\label{performance_add}
\end{table*}

\section{Reflections}
\label{discussion}
%We conclude the paper by presenting some discussions.
%GenF is a competitive long-range forecasting strategy that uses GAN-based synthetic data to strike a balance between iterative and direct forecasting. 

In this paper, we propose a competitive long-range forecasting strategy, called GenF, which is able to better balance the forecasting bias and variance, leading to an improvement of  5\% - 11\% in forecasting performance while having 15\% - 50\% less parameters. 
We now conclude the paper by discussing some relevant points and avenues for future research.

%\noindent {\bf Averaged Performance over All Subjects: } We also study the performance {\em averaged over all subjects}. In addition to the four classical models and the three strong baselines, we also explore a state-of-the-art GAN based model called TimeGAN \cite{yoon2019time} to generate synthetic data. We summarize the average performance of synthetic data generation and long-range forecasting  in Table \ref{ablation_study_new} in the Appendix, where we observe that CWGAN-TS outperforms the best performing benchmark by up to 10\% in synthetic data generation and long-range forecasting. 

\noindent {\bf (1) Selection of the Synthetic Window Length $L$: } The performance of GenF depends on the choice of $L$. Our results suggest that as the prediction horizon grows, increasing $L$ could be helpful (i.e., $L$ = 3 for $t + 8$, $L$ = 6 for $t + 12$ in Table \ref{performance_short}).  Theoretically determining the optimal $L$ clearly deserves deeper thought. Alternatively, $L$ can be thought of as a hyper-parameter and tuned via trial and error. 

\noindent {\bf (2) Flexibility of GenF: }In fact, GenF can be considered as a general framework and is flexible enough to support any model as the synthetic data generator and predictor. We evaluate the forecasting performance using Informer/LogSparse as the predictor and find the performance is not comparable to that of GenF which uses the canonical transformer (see Table \ref{performance_add}). We posit this is due to the simplification of transformer complexity introduced by Informer/LogSparse.

\noindent {\bf (3) The ITC Algorithm: } In the ITC algorithm, we only use first order mutual information as the scoring function. We note that other types of scoring functions, such as joint mutual information, conditional mutual information or pairwise mutual information, could be a better choice for the scoring function. We will explore them in our future research.

\newpage
\clearpage
\balance
\bibliographystyle{plain}
\bibliography{bare_jrnl_new_sample4}

\newpage
\clearpage
\appendix
\nobalance
\setcounter{theorem}{0}
\setcounter{prop}{0}
\setcounter{corollary}{0}
\setlength{\belowdisplayshortskip}{\belowdisplayskip}
\section{Proofs of Theoretical Results}
%For Proposition \ref{prop:1}, we provide minor corrections, which lead to minor changes in the Theorem \ref{thm:1} and Corollary \ref{corr:1}, and are described below. Note that these corrections do not alter the form and implications of the results and their discussion in the main paper. 
We now provide proofs for the theoretical results in the main paper.
\begin{prop}\label{prop:1}
Let $S$ be the sum of bias and variance terms, we have $S_{dir} = $ $B_{dir}(N)$ + $V_{dir}(N)$ for direct forecasting and $S_{iter}$ = $B_{iter}(N)$ + $V_{iter}(N)$ for iterative forecasting. For GenF, let $Y_{M-L} = \{X_{L+1}, \cdots, X_{M}\}$ and $Y_{L} = \{{X}_{M+1}, \cdots, {X}_{M+L}\}$ be the past observations and let $\widetilde{Y}_{L} = \{\widetilde{X}_{M+1}, \cdots, \widetilde{X}_{M+L}\}$ be the generated synthetic data for the next $L$ time steps. Let $\gamma(\theta, N-L) =f(\{Y_{M-L},\widetilde{Y}_{L}\},\theta,N-L)-f(\{Y_{M-L},{Y}_{L}\},\theta,N-L) $. Note that $\gamma(\theta,0)=\widetilde{Y}_{N}-Y_{N}$. We then have, 
\begin{align}
    S_{GenF} = & \underbrace{\mathbb{E}_{\theta \sim \Theta,Y}[\gamma(\theta, N-L)^2]}_\text{\small Iterative Forecasting} \nonumber \\ &+ \underbrace{B_{dir} (N - L) + V_{dir} (N - L) }_\text{\small Direct Forecasting}, \label{eq2}
\vspace*{1mm}\end{align}
where $\mathbb{E}_{\theta \sim \Theta}[\gamma(\theta, 0)^2] = S_{iter}=B_{iter}(N)$ + $V_{iter}(N).$
\end{prop}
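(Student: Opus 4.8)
The plan is to reduce the claim to two applications of the bias--variance identity \eqref{theorem_eq}. For any forecasting rule with output $f$ aimed at $X_{M+N}$, the $\Theta$-centred term in \eqref{theorem_eq} cancels when one expands against $u_{M+N}$, so the sum of bias and variance is simply $\mathbb{E}_{Y,\Theta}[(f-u_{M+N})^2]$. It therefore suffices to compute $\mathbb{E}[(f_{GenF}-u_{M+N})^2]$, where $f_{GenF}=f(\{Y_{M-L},\widetilde{Y}_{L}\},\theta,N-L)$ is the GenF output.

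First I would introduce the ``oracle'' direct forecast $g:=f(\{Y_{M-L},Y_{L}\},\theta,N-L)$ --- the same $(N-L)$-step direct forecaster, but fed the \emph{true} shifted window $\{X_{L+1},\dots,X_{M+L}\}$ in place of the partly synthetic one --- so that by the definition of $\gamma$ we have $f_{GenF}=g+\gamma(\theta,N-L)$. Expanding the square,
\begin{align}
S_{GenF}=\mathbb{E}\big[(g-u_{M+N})^{2}\big]+2\,\mathbb{E}\big[(g-u_{M+N})\,\gamma(\theta,N-L)\big]+\mathbb{E}\big[\gamma(\theta,N-L)^{2}\big].\nonumber
\end{align}
Using stationarity of the series (a shift of the observation window by $L$ does not change the direct forecaster's error profile), a second application of \eqref{theorem_eq} identifies the first term with $B_{dir}(N-L)+V_{dir}(N-L)$, the \emph{Direct Forecasting} block; the last term is the \emph{Iterative Forecasting} block $\mathbb{E}_{\theta\sim\Theta,Y}[\gamma(\theta,N-L)^{2}]$, i.e.\ the downstream effect of feeding the recursively-generated values $\widetilde{Y}_{L}$ to the predictor rather than the ground truth $Y_{L}$.

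The main obstacle is showing the cross term vanishes. I would condition on $Y$ and on the trained parameters, so that $g-u_{M+N}$ is deterministic and the cross term becomes $(g-u_{M+N})\big(\mathbb{E}[f(\{Y_{M-L},\widetilde{Y}_{L}\},\theta,N-L)\mid Y,\theta]-g\big)$; this is zero once feeding the synthetic window into the direct forecaster has the same conditional mean as feeding the true window --- which holds to first order under the first-order Lipschitz (near-linear) assumption on the direct forecaster imposed in Theorem~\ref{thm:1} together with (near-)unbiasedness of the CWGAN-TS generator motivated in Section~\ref{sec3.3}. This is essentially the only modelling input the argument needs; everything else is bookkeeping.

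Finally I would check the degenerate case $L=N$: the synthetic window then already reaches time $M+N$, the residual forecast is zero-step, $g$ reduces to $X_{M+N}$, $\gamma(\theta,0)=\widetilde{Y}_{N}-Y_{N}$, and $B_{dir}(0)=V_{dir}(0)=0$; substituting back recovers $S_{GenF}=\mathbb{E}_{\theta\sim\Theta}[\gamma(\theta,0)^{2}]$, which equals $S_{iter}=B_{iter}(N)+V_{iter}(N)$ by the same \eqref{theorem_eq}-argument applied to pure iterative forecasting. The main-text grouping \eqref{eq2_main} then follows by re-expressing $\mathbb{E}[\gamma(\theta,N-L)^{2}]$ through the horizon-$L$ iterative bias and variance via the Lipschitz recurrence introduced in Theorem~\ref{thm:1}.
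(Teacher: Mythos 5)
Your proposal follows essentially the same route as the paper's proof: write the GenF forecast as the oracle direct forecast $f(\{Y_{M-L},Y_{L}\},\theta,N-L)$ plus $\gamma(\theta,N-L)$, expand the square, discard the cross term, and identify the two surviving terms with $B_{dir}(N-L)+V_{dir}(N-L)$ and $\mathbb{E}_{\theta\sim\Theta,Y}[\gamma(\theta,N-L)^2]$ (working against $u_{M+N}$ rather than $X_{M+N}$ is just the paper's ``neglect $Z(N)$'' step done explicitly). The only divergence is how the cross term is killed --- the paper asserts independence of the iterative-forecast error $\gamma$ from the direct-forecast error, while you invoke conditional unbiasedness of the generator plus near-linearity of the direct forecaster --- and both justifications sit at the same heuristic level, so your argument matches the paper's in both structure and rigor.
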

\begin{proof}
We consider the decomposition mean-squared error at the horizon $N$, neglecting the noise term $Z(N)$, as follows
\begin{align}
&\mathbb{E}_{Y,\theta}[(\widetilde{X}_{M+N}-X_{M+N})^2] \nonumber \\ = & \mathbb{E}_{Y,\theta}[(f(\{Y_{M-L},\widetilde{Y}_{L}\},\theta,N-L)-X_{M+N})^2]  \nonumber \\
=& \mathbb{E}_Y[(f(\{Y_{M-L},\widetilde{Y}_{L}\},\theta,N-L) \nonumber \\ &-(f(\{Y_{M-L},{Y}_{L}\},\theta,N-L) \nonumber \\ 
& + (f(\{Y_{M-L},{Y}_{L}\},\theta,N-L)-X_{M+N})^2] \nonumber \\ 
=&\mathbb{E}_{Y,\theta}[(f(\{Y_{M-L},\widetilde{Y}_{L}\},\theta,N-L) \nonumber \\ 
& -(f(\{Y_{M-L},{Y}_{L}\},\theta,N-L)] \nonumber \\
& + \mathbb{E}_{Y,\theta}[ (f(\{Y_{M-L},{Y}_{L}\},\theta,N-L)-X_{M+N})^2] \nonumber \\
=& \mathbb{E}_{\theta \sim \Theta,Y}[\gamma(\theta, N-L)^2] \nonumber \\ &+ B_{dir} (N-L) + V_{dir} (N-L). 
\end{align}
Here, the third step follows from the fact that the error terms $f(\{Y_{M-L},\widetilde{Y}_{L}\},\theta,N-L)-f(\{Y_{M-L},{Y}_{L}\},\theta,N-L)$ and $f(\{Y_{M-L},{Y}_{L}\},\theta,N-L)-X_{M+N}$ are independent, as $f(\{Y_{M-L},\widetilde{Y}_{L}\},\theta,N-L)-f(\{Y_{M-L},{Y}_{L}\},\theta,N-L)$ depends primarily on the error of the iterative forecaster, which is independent of the direct forecasting error.
\end{proof}

\begin{theorem}\label{thm:1}
We consider the direct forecasting with parameters $\theta_D$, iterative forecasting with parameters $\theta_I$, and the proposed GenF in Proposition \eqref{prop:1}. Assume that the ground truth realization of the forecasting process can be modelled by some $\theta_D^*$ and $\theta_I^*$, and after training, the estimated parameters follow $\theta_D \sim \mathcal{N}(\theta_D^*,\sigma_D^2)$ and $\theta_I \sim \mathcal{N}(\theta_I^*,\sigma_I^2)$. Assume that the iterative forecasting function is 2nd-order $L_1,L_2$-Lipschitz continuous, and the direct forecasting function is first order Lipschitz continuous. Let us denote quadratic recurrence relations of the form $b_{\alpha}(k+1)= b_{\alpha}(k)\left(L_1+1+b_{\alpha}(k)L_2 \right),$ where $b_{\alpha}(1)=\alpha \sigma_I^2$, for any $\alpha\geq0$. Assume that iterative forecasting has zero variance and direct forecasting has zero bias. Then, for some constants $\beta_0,\beta_1,\beta_2\geq0$, which represent the Lipschitz constants of the direct forecasting function, we have $S_{dir} \leq  U_{dir}$, $S_{iter} \leq U_{iter}$ and $S_{GenF} \leq U_{GenF}$, where $U_{dir} =  (N-1)\beta_1 + \sigma_{D}^2\beta_2$, $U_{iter} = b_{\alpha}(N)^2$, and $U_{GenF} = b_{\alpha}(L)^2(\beta_0) + (N-L-1)\beta_1 + \sigma_{D}^2\beta_2$. The quantities $\alpha$ and $\beta_0,\beta_1,\beta_2$ depend on the iterative and direct forecasting functions respectively.
\end{theorem}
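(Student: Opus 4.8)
The plan is to establish the three upper bounds separately and then assemble the GenF bound. The two simplifying hypotheses do most of the structural work: since the iterative forecaster has zero variance, $S_{iter}=B_{iter}(N)$ is a pure (mean-squared) bias term, and since the direct forecaster has zero bias, $S_{dir}=V_{dir}(N)$ is a pure variance term; the GenF bound then follows by substituting these into the decomposition \eqref{eq2} of Proposition \ref{prop:1}.

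\textbf{Bounding $S_{dir}$.} Here $S_{dir}=V_{dir}(N)=\mathbb{E}_{Y}\big[\mathrm{Var}_{\theta_D}\big(f(Y,\theta_D,N)\big)\big]$ by the variance definition in \eqref{theorem_eq} and the zero-bias assumption. Using $\theta_D\sim\mathcal{N}(\theta_D^*,\sigma_D^2)$ together with first-order Lipschitz continuity of the direct forecasting function in its parameter argument — whose effective Lipschitz budget grows affinely in the horizon because the horizon-$N$ training target absorbs the process noise of the $N-1$ intervening steps — I would bound $\mathrm{Var}_{\theta_D}\big(f(Y,\theta_D,N)\big)\le (N-1)\beta_1+\sigma_D^2\beta_2$ for the relevant Lipschitz constants $\beta_1,\beta_2$, uniformly in $Y$, and take expectation over $Y$ to get $U_{dir}$. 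The identical argument run at horizon $N-L$ gives $V_{dir}(N-L)\le (N-L-1)\beta_1+\sigma_D^2\beta_2$, which I will reuse below.

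\textbf{Bounding $S_{iter}$.} Since $S_{iter}=B_{iter}(N)=\mathbb{E}_{Y}\big[(u_{M+N}-\mathbb{E}_{\theta_I}[f(Y,\theta_I,N)])^2\big]$, it suffices to control the bias magnitude $\big|\,\mathbb{E}_{\theta_I}[f(Y,\theta_I,k)]-u_{M+k}\,\big|$ recursively in $k$ and square at $k=N$. For the base case, a second-order Taylor expansion of $f(Y,\cdot,1)$ about $\theta_I^*$, together with $\mathbb{E}[\theta_I-\theta_I^*]=0$ and $\mathbb{E}[(\theta_I-\theta_I^*)^2]=\sigma_I^2$, shows the step-one bias is of order (curvature)$\,\times\,\sigma_I^2$, which I denote $b_{\alpha}(1)=\alpha\sigma_I^2$ with $\alpha$ read off from the second-order Lipschitz constant. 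For the inductive step, the step-$(k+1)$ forecast feeds its own step-$k$ output — carrying bias at most $b_{\alpha}(k)$ — back in as input; applying the $2$nd-order $L_1,L_2$-Lipschitz inequality $|g(\hat x)-g(x)|\le L_1|\hat x-x|+L_2|\hat x-x|^2$ to propagate that error, and adding the freshly injected parameter bias (again at most $b_{\alpha}(1)\le b_{\alpha}(k)$, using that $b_{\alpha}$ is nondecreasing), yields exactly $b_{\alpha}(k+1)\le b_{\alpha}(k)\big(L_1+1+b_{\alpha}(k)L_2\big)$. Hence $B_{iter}(N)\le b_{\alpha}(N)^2=U_{iter}$.

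\textbf{Bounding $S_{GenF}$ and the main obstacle.} By \eqref{eq2}, $S_{GenF}=\mathbb{E}_{\theta\sim\Theta,Y}[\gamma(\theta,N-L)^2]+B_{dir}(N-L)+V_{dir}(N-L)$. The quantity $\gamma(\theta,N-L)$ is the difference between the direct forecaster evaluated on the generated window $\widetilde{Y}_{L}$ and on the true window $Y_{L}$; since the iterative generator has zero variance, $\|\widetilde{Y}_{L}-Y_{L}\|$ is a pure bias and is bounded by $b_{\alpha}(L)$ from the previous step, so first-order Lipschitz continuity of the direct forecasting function gives $\mathbb{E}[\gamma(\theta,N-L)^2]\le \beta_0\,b_{\alpha}(L)^2$ with $\beta_0$ the squared input-Lipschitz constant. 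Combining this with $B_{dir}(N-L)=0$ and the reused bound $V_{dir}(N-L)\le (N-L-1)\beta_1+\sigma_D^2\beta_2$ gives $S_{GenF}\le b_{\alpha}(L)^2\beta_0+(N-L-1)\beta_1+\sigma_D^2\beta_2=U_{GenF}$. I expect the hard part to be the iterative recursion: rigorously justifying the base case (second-order control of the bias of a nonlinear forecaster under a Gaussian parameter perturbation, and pinning $\alpha$ to the curvature/second-order Lipschitz constant), and then showing the inductive propagation really does close in the stated quadratic-recurrence form — in particular that the per-step injected bias can be absorbed into the ``$+1$'' coefficient thanks to monotonicity of $b_{\alpha}$, and that the Jensen step from bias magnitude to $B_{iter}(N)=\mathbb{E}_Y[(\cdot)^2]$ preserves the clean bound. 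A secondary subtlety is maintaining throughout the independence between the iterative-generation error and the direct-forecasting error that Proposition \ref{prop:1} relies on, so the cross terms vanish and the contributions decouple additively.
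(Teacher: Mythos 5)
Your proposal is correct and follows essentially the same route as the paper: reduce $S_{iter}$ to bias and $S_{dir}$ to variance via the zero-variance/zero-bias assumptions, propagate the per-step parameter-induced bias $b_{\alpha}(1)=\alpha\sigma_I^2$ through the $2$nd-order $L_1,L_2$-Lipschitz recursion to get $b_{\alpha}(k+1)=b_{\alpha}(k)(L_1+1+b_{\alpha}(k)L_2)$, bound the direct variance by $(N-L-1)\beta_1+\sigma_D^2\beta_2$, control $\gamma$ by input-Lipschitz continuity against $\|\widetilde{Y}_L-Y_L\|\leq b_{\alpha}(L)$, and assemble via Proposition \ref{prop:1}. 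The only cosmetic difference is your justification of the linear-in-horizon term (the paper simply posits Lipschitz continuity of $V_{dir}$ in the horizon with constant $\beta_1$, then bounds $V_{dir}(1)\leq\beta_2\sigma_D^2$, which is exactly the two-step decomposition your bound implicitly uses) and that the paper's $\beta_0=(L\beta_0')^2$ absorbs a factor of $L$ you do not make explicit.
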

\begin{proof}
Let $X_{k:m}=\{X_k,X_{k+1},...,X_m \}$. Let $f_I$ and $f_D$ represent the iterative and direct forecasting functions respectively, with $\theta_I$ and $\theta_D$ as their respective parameters. As stated in the theorem, we assume that the ground truth realization can be expressed via some configuration of both these forecasters. That is, if 
\vspace{1mm}\begin{equation}
    X_{k+1} = f_{GT}(X_{k+1-m:k+1}) + \epsilon
\end{equation} \vspace{1mm}
represent the ground truth realization that generates the data, and for the direct forecasting function with a horizon $N$, let 
\begin{equation}
    \mathbb{E}_{\epsilon} [X_{k+N}] = f_{GT}(X_{k+1-m:k+1},N)
\end{equation}
represent the mean-squared error minimizing function.Then, as per the assumptions stated in the theorem, there exist $\theta_I^*$ and $\theta_D^*$ such that $f_I(X_{k+1-m:k+1},\theta_I) \approx f_{GT}(X_{k+1-m:k+1})$ and $f_D(X_{k+1-m:k+1},N,\theta_D) \approx f_{GT}(X_{k+1-m:k+1},N)$. Thus, we assume that the direct and iterative forecasters are complex enough to have a configuration close to the ground truth realization. Also note that, as we are only estimating $S_{GenF}$, we do not consider the noise term in our estimation. 

With this, we first estimate the bias of the iterative forecasting part of GenF. 
First, as $\theta_I^*$ represents the underlying realization of the process, we have that 
\vspace{1mm}\begin{equation}
    X_{m+1} = f_I(X_{1:m},\theta_I^*) + \epsilon_1, 
\end{equation}

and so on for all subsequent observations, for some $\epsilon_1\sim \mathcal{N}(0,\sigma_0^2)$. As we assume iterative forecasters with low variance, we intend to compute $\left(\mu_{m+k} - \widehat{\mu}_{m+k}\right)^2$, as the variance term is assumed to be insignificant compared to bias. For the trained $f_I$ with parameters $\theta_I \sim \mathcal{N}(\theta_I^*,\sigma_I^2)$. For the predicted sequence we can write, 
\begin{align}
    &\widehat{X}_{m+1} =  f_I(X_{1:m},\theta_I) + \epsilon_1'\\
    &\widehat{X}_{m+2} = f_I(\{ X_{2:m}, \widehat{X}_{m+1}\},\theta_I) +  \epsilon_2',
\end{align}

and similarly for the subsequent time-steps. Note that as the only change i the trained realization is in $\theta_I$, the error term $\epsilon_2$ will not change in its distribution, i.e., $\epsilon_2 \sim \mathcal{N}(0,\sigma_0^2)$. We consider a first-order Lipschitz continuous expansion of $f_I$ w.r.t $\theta_I$ and a second-order expansion w.r.t the data points $X$, as we will see that $\mu_{m+k} - \widehat{\mu}_{m+k}$ is only affected by the second-order Lipschitz term. For simplicity of notation, we only include the first and second order Taylor expansions of the arguments of $f_I$ in our equations, and write up to the second-order Taylor expansions of all terms. Note that this does not change the 2nd-order Lipschitz continuity constraint. We can then write
\vspace{1mm}\begin{align}
\widehat{X}_{m+2} = & f_I(\{ X_{2:m}, \widehat{X}_{m+1}\},\theta_I) +  \epsilon_2'  \nonumber \\
 = & f_I(\{ X_{2:m}, f_I(X_{1:m},\theta_I) + \epsilon_1' \nonumber \\ & + a_1\mathbb{E}[(\theta_I-\theta^*_I)]\},\theta_I) +  \epsilon_2' \nonumber \\
 = & f_I(\{ X_{2:m}, f_I(X_{1:m},\theta_I) + \epsilon_1'\},\theta_I) \nonumber \\ & + L_1^2\mathbb{E}[(\theta_I-\theta^*_I)] + L_2 L_1^2\mathbb{E}[(\theta_I-\theta^*_I)^2]+  \epsilon_2'
\end{align}

Here, the notation $\mathbb{E}[(\theta_I-\theta^*_I)]$ and $\mathbb{E}[(\theta_I-\theta^*_I)^2]$ represent the sum and the sum of squares of the differences between the corresponding parameters. As $\theta_I \sim \mathcal{N}(\theta_I^*,\sigma_I^2)$, we have that 
\begin{align}
\widehat{\mu}_{m+2} =& \mathbb{E}_{\theta_I,X} [ f_I(\{ X_{2:m}, f_I(X_{1:m},\theta_I) + \epsilon_1'\},\theta_I) \nonumber \\ & + L_1^2\mathbb{E}[(\theta_I-\theta^*_I)] + L_2 L_1^2\mathbb{E}[(\theta_I-\theta^*_I)^2]+  \epsilon_2' ] \nonumber \\
\leq & \mathbb{E}_{\theta_I,X} \left [ f_I(\{ X_{2:m}, f_I(X_{1:m},\theta_I) + \epsilon_1'\},\theta_I) + \epsilon_2' \right] \nonumber \\ & +  L_1^2\mathbb{E}_{\theta_I,X}\left[\mathbb{E}[(\theta_I-\theta^*_I)]\right] \nonumber \\ & + L_2 L_1^2\mathbb{E}_{\theta_I,X}\left[\mathbb{E}[(\theta_I-\theta^*_I)^2]\right] \nonumber \\
= & \mu_{m+2} + 0 + L_2 L_1^2\sigma_I^2. 
\end{align}

Due to the assumption of low-variance iterative forecasters, all subsequent 2nd-order Taylor terms in the expansion of the arguments $\widehat{X}_{m+i}$ will be significantly greater than the first order terms. Thus from $m+2$ and onwards, we can substitute the $\widehat{X}_{m+i}$ argument of $f_I$ with $\widehat{\mu}_{m+i}$, when estimating $\widehat{\mu}_{m+3},\widehat{\mu}_{m+4}..$ and so on. Let $\alpha=L_2 L_1^2$. With these considerations, for $\widehat{\mu}_{m+3}$, we can write 
\begin{align}
\widehat{\mu}_{m+3} =& \mathbb{E}_{\theta_I,X} [ f_I(\{ X_{3:m}, f_I(X_{1:m},\theta_I) + \epsilon_1' \nonumber \\ &+ L_1\mathbb{E}[(\theta_I-\theta^*_I)],\mu_{m+2} + \alpha\mathbb{E}[(\theta_I-\theta^*_I)^2]\},\theta_I) +  \epsilon_2' ] \nonumber \\
\leq &\mathbb{E}_{\theta_I,X} [f_I(\{ X_{3:m}, f_I(X_{1:m},\theta_I) + \epsilon_1' ,\mu_{m+2} \nonumber \\ &+ \alpha\mathbb{E}[(\theta_I-\theta^*_I)^2]\},\theta_I) + L_1^2\mathbb{E}[(\theta_I-\theta^*_I)] \nonumber \\ 
& + L_2 L_1^2\mathbb{E}[(\theta_I-\theta^*_I)^2]+L_1\alpha\mathbb{E}[(\theta_I-\theta^*_I)^2] \nonumber \\ &+ L_2 \left(\alpha\mathbb{E}[(\theta_I-\theta^*_I)^2]\right)^2 + \epsilon_2' \nonumber \\ 
\leq & \mathbb{E}_{\theta_I,X} [f_I(\{ X_{3:m}, f_I(X_{1:m},\theta_I) + \epsilon_1' ,\mu_{m+2} \nonumber \\ &+ \alpha\mathbb{E}[(\theta_I-\theta^*_I)^2]\},\theta_I) + 0 \nonumber \\
& + \alpha\sigma_I^2+L_1\alpha\sigma_I^2 + L_2(\alpha\sigma_I^2)^2  + \epsilon_2'  ] \nonumber \\
\leq & \mu_{m+3}+ \alpha\sigma_I^2(1+L_1)+ L_2(\alpha\sigma_I^2)^2
\end{align}

Using this expansion, we can converge to a generalization of the iterative sequence that generates $\mu_{m+i}$, as follows. Let $b_{\alpha}(i)=\widehat{\mu}_{m+i}-\mu_{m+i}$. Based on our previous expansions, note that we can write: 
\begin{align}
    b_{\alpha}(k+1) =  (1+L_1)b_{\alpha}(k) + L_2\left(b_{\alpha}(k)\right)^2\label{quad_map} 
\end{align}

Here, \eqref{quad_map} represents a quadratic recurrence function and using the same we note that the bias term for $f_I$, $(\widehat{\mu}_{m+L}-\mu_{m+L})^2$, can then be written as $(\widehat{\mu}_{m+L}-\mu_{m+L})^2\leq b_{\alpha}(L)^2$. This accounts for the error for the iterative part of GenF. Next, we first estimate the variance of the direct forecasting function $f_D$. We have that 
\begin{align}
&V_{dir}(N-L) = \mathbb{E}_{X,\theta_D}[(f_D(X_{L:m+L},\theta_D,N-L)  \nonumber \\ &- \mathbb{E}_{\theta_D\sim\mathcal{N}(\theta^*_D,\sigma_D^2)}[f_D(X_{L:m+L},\theta_D,N-L)])^2].
\end{align}

As we assume low-bias $f_D$, the mean estimate $\mathbb{E}_{\theta_D\sim\mathcal{N}(\theta^*_D,\sigma_D^2)}[f_D(X_{L:m+L},\theta_D,N-L)]=f_D(X_{L:m+L},\theta^*_D,N-L)$, yielding
\begin{align}
    &\mathbb{E}_{X,\theta_D\sim\mathcal{N}(\theta^*_D,\sigma_D^2)}[(f_D(X_{L:m+L},\theta_D,N-L)  \nonumber \\ &- f_D(X_{L:m+L},\theta^*_D,N-L))^2].
\end{align}

Using the Lipschitz continuity of $f_D$ w.r.t $\theta_D$ (assuming a Lipschitz constant of $\beta_2$), and the Lipschitz continuity of $V_{dir}(N-L)$ w.r.t $N-L$ itself (Lipschitz constant of $\beta_1$), we can write
\vspace{1mm}\begin{align}
    V_{dir}(N-L) \leq & V_{dir}(1) + (N-L-1)\beta_1  \nonumber \\ 
     = & \mathbb{E}_{X,\theta_D\sim\mathcal{N}(\theta^*_D,\sigma_D^2)}[(f_D(X_{L:m+L},\theta_D,1) \nonumber \\  &- f_D(X_{L:m+L},\theta^*_D,1))^2]+ (N-L-1)\beta_1 \nonumber \\
    \leq & \beta_2\mathbb{E}_{\theta_D\sim\mathcal{N}(\theta^*_D,\sigma_D^2)}[(\theta_D -\theta^*_D)^2] \nonumber \\ &+ (N-L-1)\beta_1 \nonumber \\ = &\beta_2\sigma_D^2 + (N-L-1)\beta_1.
\end{align}

Note that we use this two-step Lipschitz decomposition, instead of a single-step decomposition with $\theta_D$, as without any constraints on the horizon $N-L$, the Lipschitz constant of $f_D(X_{L:m+L},\theta_D,N-L)$ w.r.t $\theta_D$, as $N-L\xrightarrow[]{} \infty$, will be very large and thus, vacuous. Through this two-step decomposition, note that the Lipschitz constant $\beta_2$ will not be as large, as it applies to $f_D(X_{L:m+L},\theta_D,1)$ which has a horizon of $1$. Next, we bound the last term $\mathbb{E}_{\theta \sim \Theta}[\gamma(\theta, N-L)^2]$ in Proposition 1, again via the Lipschitz continuity of $f(\{X_{L:m},\widetilde{X}_{m:m+L}\},\theta,N-L)$ w.r.t $\{X_{L:m},\widetilde{X}_{m:m+L}\}$ (assuming a Lipschitz constant of $\beta'_0$), as follows.
\begin{align}
    & \mathbb{E}_{\theta \sim \Theta}[\gamma(\theta, N-L)^2] \nonumber \\ = & \mathbb{E}_{\theta \sim\mathcal{N}(\theta^*_D,\sigma_D^2) }\left[\left( f(\{X_{L:m},\widetilde{X}_{m:m+L}\},\theta,N-L) \right. \right. \nonumber \\ 
    & -f(\{X_{L:m},X_{m:m+L}\},\theta,N-L))^2] \nonumber \\ 
    \leq & \mathbb{E}_{\theta \sim\mathcal{N}(\theta^*_D,\sigma_D^2) }\left[\left(L\beta'_0(\widetilde{X}_{m+L} - X_{m+L})\right)^2\right] \nonumber \\ 
    \leq & (L\beta'_0)^2\mathbb{E}_{\theta \sim\mathcal{N}(\theta^*_D,\sigma_D^2) }\left[\left(\widetilde{X}_{m+L} - X_{m+L})\right)^2\right] \nonumber \\ =& (L\beta'_0)^2(b_{\alpha}(L))^2.
\end{align}

Denoting $\beta_0=(L\beta'_0)^2$, and combining all error upper bounds from the iterative and direct forecasting functions, we finally have
\begin{align}
    S_{GenF} & \leq \beta_0(b_{\alpha}(L))^2 + \beta_2\sigma_D^2 + (N-L-1)\beta_1 \nonumber \\ &= b_{\alpha}(L)^2(\beta_0) + (N-L-1)\beta_1 + \sigma_{D}^2\beta_2.
\end{align}

Denoting this upper bound via $U_{GenF}$, and similarly for $U_{iter}$ and $U_{dir}$, yields the final result. 
\end{proof}

\begin{corollary}\label{corr:1}
$U_{dir}$, $U_{iter}$ and $U_{GenF}$ are as defined in Theorem \ref{thm:1}.
When  $\beta_0<\min\{\beta_1/b_{\alpha}(1)^2,(b_{\alpha}(N)^2-\sigma_D^2\beta_2)/b_{\alpha}(N-1)^2\}$, we have that $U_{GenF} < U_{iter}$ and $U_{GenF} < U_{dir}$, for some $0<L<N$. Furthermore, when $(N-1)\beta_1 + \sigma_{D}^2\beta_2 \approx  b_{\alpha}(N)^2$, we have $U_{GenF} < U_{iter}$ and $U_{GenF} < U_{dir}$, for \textit{any} choice of $0<L<N$.
\end{corollary}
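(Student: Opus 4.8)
The plan is to reduce both target inequalities to a single scalar comparison and then to exhibit an appropriate $L$. Writing $U_{GenF}(L)$ to make the $L$-dependence in Theorem~\ref{thm:1} explicit, one has $U_{GenF}(L) = \beta_0 b_{\alpha}(L)^2 + (N-L-1)\beta_1 + \sigma_D^2\beta_2$ and $U_{dir} = (N-1)\beta_1 + \sigma_D^2\beta_2$, so a one-line subtraction gives $U_{GenF}(L) - U_{dir} = \beta_0 b_{\alpha}(L)^2 - L\beta_1$; hence $U_{GenF}(L) < U_{dir}$ is \emph{equivalent} to $\beta_0 b_{\alpha}(L)^2 < L\beta_1$. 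Similarly, the term $(N-L-1)\beta_1$ vanishes at $L=N-1$, so $U_{GenF}(N-1) = \beta_0 b_{\alpha}(N-1)^2 + \sigma_D^2\beta_2$, and $U_{GenF}(N-1) < U_{iter} = b_{\alpha}(N)^2$ is equivalent (dividing by $b_{\alpha}(N-1)^2>0$) to $\beta_0 < (b_{\alpha}(N)^2-\sigma_D^2\beta_2)/b_{\alpha}(N-1)^2$. With these two reductions the corollary becomes essentially bookkeeping.

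First statement. Evaluate the first reduction at $L=1$: the hypothesis $\beta_0 < \beta_1/b_{\alpha}(1)^2$ says exactly $\beta_0 b_{\alpha}(1)^2 < \beta_1$, i.e.\ $U_{GenF}(1) < U_{dir}$. The hypothesis $\beta_0 < (b_{\alpha}(N)^2-\sigma_D^2\beta_2)/b_{\alpha}(N-1)^2$ is exactly the second reduction, i.e.\ $U_{GenF}(N-1) < U_{iter}$. To obtain a single $L$ that beats both bounds at once, take $L^{\star} = \argmin_{1\le L\le N-1} U_{GenF}(L)$; then $U_{GenF}(L^{\star}) \le U_{GenF}(1) < U_{dir}$ and $U_{GenF}(L^{\star}) \le U_{GenF}(N-1) < U_{iter}$ simultaneously, which proves the claim for $L = L^{\star}$ (and $0 < L^{\star} < N$).

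Second statement. Use the standing approximation $(N-1)\beta_1 + \sigma_D^2\beta_2 \approx b_{\alpha}(N)^2$ to rewrite the second threshold as $(b_{\alpha}(N)^2-\sigma_D^2\beta_2)/b_{\alpha}(N-1)^2 \approx (N-1)\beta_1/b_{\alpha}(N-1)^2$, so the full hypothesis reads $\beta_0/\beta_1 < \min\{1/b_{\alpha}(1)^2,\ (N-1)/b_{\alpha}(N-1)^2\}$. Set $g(L) = L/b_{\alpha}(L)^2$. The structural step is that $g$ is unimodal on $\{1,\dots,N-1\}$: the recurrence forces $b_{\alpha}$ to be positive, strictly increasing, and convex (its forward differences $b_{\alpha}(k)(L_1+b_{\alpha}(k)L_2)$ are increasing in $k$), hence $b_{\alpha}^2$ is convex, and $g(L+1)/g(L) = \frac{L+1}{L}(L_1+1+b_{\alpha}(L)L_2)^{-2}$ is a product of two non-increasing positive factors, hence non-increasing in $L$; a positive sequence whose successive ratios are non-increasing is log-concave, hence unimodal, so $\min_{1\le L\le N-1} g(L) = \min\{g(1),g(N-1)\}$. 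The rewritten hypothesis therefore gives $\beta_0 < \beta_1 g(L)$, i.e.\ $\beta_0 b_{\alpha}(L)^2 < L\beta_1$, i.e.\ $U_{GenF}(L) < U_{dir}$, for \emph{every} $0 < L < N$; and since $U_{dir} \approx U_{iter}$ under the same approximation, also $U_{GenF}(L) < U_{iter}$, as claimed.

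The main obstacle I anticipate is the structural lemma on $g(L) = L/b_{\alpha}(L)^2$: one must extract convexity of $b_{\alpha}$ (and hence of $b_{\alpha}^2$) from the quadratic recurrence and then argue that the resulting affine-over-convex ratio is unimodal, so that its minimum over a contiguous index range is controlled by the two endpoint values the hypothesis bounds. Everything else — the two algebraic reductions and the choices $L=1$, $L=N-1$, $L=L^{\star}$ — is routine; the only delicacy is that the ``$\approx$'' in the second statement propagates into an approximate conclusion, and one should keep in mind the implicit nondegeneracy $b_{\alpha}(L)>0$ and $b_{\alpha}(N)^2 > \sigma_D^2\beta_2$ so that the thresholds are meaningful.
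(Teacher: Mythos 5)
Your proof is correct, and its skeleton is the same as the paper's: compare $U_{GenF}(L)$ with $U_{dir}$ at $L=1$ and with $U_{iter}$ at $L=N-1$, observe that the two hypothesis thresholds are exactly these two endpoint inequalities, and treat the second claim by substituting the approximate equality $(N-1)\beta_1+\sigma_D^2\beta_2\approx b_{\alpha}(N)^2$. Where you go beyond the paper is in two places, both improvements in rigor. First, for the ``some $0<L<N$'' claim the paper only checks the two endpoint inequalities and leaves implicit that a single $L$ works for both; your choice $L^{\star}=\argmin_{1\le L\le N-1}U_{GenF}(L)$ makes that step explicit and airtight. Second, and more substantially, the paper simply \emph{asserts} the ``for any $0<L<N$'' conclusion once the two extreme bounds coincide, whereas this actually requires knowing that $\beta_0 b_{\alpha}(L)^2< L\beta_1$ for all interior $L$ given only the constraints at $L=1$ and $L=N-1$; your lemma that $g(L)=L/b_{\alpha}(L)^2$ has non-increasing successive ratios $\frac{L+1}{L}\left(L_1+1+b_{\alpha}(L)L_2\right)^{-2}$, hence is log-concave, unimodal, and endpoint-minimized, is exactly the missing structural ingredient, and your computation of the ratio from the recurrence is correct (the convexity remark is not even needed). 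Your closing caveats are also apt: the hypotheses implicitly need $b_{\alpha}(1)=\alpha\sigma_I^2>0$, $\beta_1>0$ and $b_{\alpha}(N)^2>\sigma_D^2\beta_2$ for the thresholds to be meaningful, and the ``$\approx$'' in the second statement can only yield an approximate (or limiting) version of $U_{GenF}<U_{iter}$, a point the paper's proof glosses over in the same way.
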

\begin{proof}
First, with regard to the various terms in $U_{GenF}$, we note that $b_{\alpha}(L)^2(\beta_0)$ is an increasing function of $L$, whereas $ (N-L-1)\beta_1 + \sigma_{D}^2\beta_2$ is a decreasing function of $L$. For simplicity of notation, let us denote $U_{GenF}$ via $U_{GenF}(L)$, for a certain choice of $L$. Thus, for $U_{GenF}(L)$ to be less than the value at its extremes ($U_{dir}$ at $L=0$ and $U_{iter}$ at $L=N$) for some $0<L<N$, we must have that $U_{dir}-U_{GenF}(1)>0$ and $U_{iter} - U_{GenF}(N-1)>0$. This yields
\vspace{1mm}\begin{equation}
    \beta_1 - b_{\alpha}(1)^2(\beta_0)>0, \ and
\end{equation}
\vspace{-2mm}\begin{equation}
    b_{\alpha}(N)^2 - b_{\alpha}(N-1)^2(\beta_0)  - \sigma_D^2\beta_2>0,
\end{equation}

respectively, which summarizes to $\beta_0<\min\{\beta_1/b_{\alpha}(1)^2,(b_{\alpha}(N)^2-\sigma_D^2\beta_2)/b_{\alpha}(N-1)^2\}$. Furthermore, we note that when the upper bounds at $L=0$ and $L=N$ are equal, i.e., $(N-1)\beta_1 + \sigma_{D}^2\beta_2 \approx  b_{\alpha}(N)^2$, then assuming $\beta_0<\min\{\beta_1/b_{\alpha}(1)^2,(b_{\alpha}(N)^2-\sigma_D^2\beta_2)/b_{\alpha}(N-1)^2\}$, we must have that $U_{GenF}(L)<(N-1)\beta_1 + \sigma_{D}^2\beta_2$ and $U_{GenF}(L)<b_{\alpha}(N)^2$ for any $0<L<N$. 

\end{proof}

%\vspace*{40mm}
\begin{IEEEbiography}[{\includegraphics[width=1in,height=1.25in,clip,keepaspectratio]{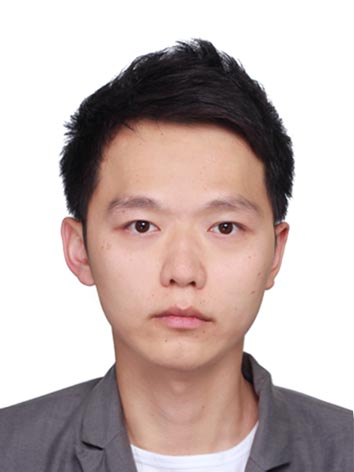}}]%
{Liu Shiyu}

received his B.Eng degree in Electrical Engineering in 2018 from National University of Singapore. Since 2018, he is a PhD candidate at National University of Singapore. His major research interests include feature engineering, time series data forecasting and neural network compression. 
\end{IEEEbiography}

\begin{IEEEbiography}[{\includegraphics[width=1in,height=1.25in,clip,keepaspectratio]{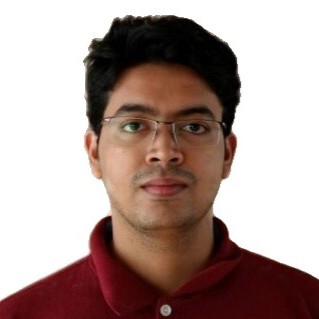}}]%
{Rohan Ghosh} received his PhD degree in Electrical \& Computer Engineering from NUS in 2019, and his B.Tech and M.Tech degrees from Indian Institute of Technology Kharagpur. He is currently a research fellow at NUS working on theoretical problems in machine learning. His research interests include machine learning, computer vision and information theory.  
\end{IEEEbiography}

\begin{IEEEbiography}[{\includegraphics[width=1in,height=1.6in,clip,keepaspectratio]{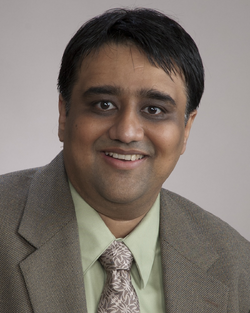}}]%
{Mehul Motani}
received the B.E. degree from Cooper Union, New York, NY, the M.S. degree from Syracuse University, Syracuse, NY, and the Ph.D. degree from Cornell University, Ithaca, NY, all in Electrical and Computer Engineering. Dr. Motani is currently an Associate Professor in the Electrical and Computer Engineering Department at the National University of Singapore (NUS) and a Visiting Research Collaborator at Princeton University.His research interests include information and coding theory, machine learning, biomedical informatics, wireless and sensor networks, and the Internet-of-Things.
\end{IEEEbiography}

\end{document}